\setlist[enumerate]{leftmargin=.5in}
\setlist[itemize]{leftmargin=.5in}
\newcommand{\R}{\mathbb{R}}
\newcommand{\N}{\mathbb{N}}
\newcommand{\Z}{\mathbb{Z}}
\newcommand{\Mc}{\mathcal{M}}
\newcommand{\Wc}{\mathcal{W}}
\newcommand{\Ic}{\mathcal{I}}
\newcommand{\Lc}{\mathcal{L}}
\newcommand{\Ec}{\mathcal{E}}
\newcommand{\Uf}{\mathfrak{U}}
\newcommand{\Df}{\mathfrak{D}}
\newcommand{\NN}{\mathcal{NN}}
\newcommand{\Rf}{\mathfrak{R}}
\newcommand{\D}{\mathrm{D}}
\newcommand{\wrt}{\:\mathrm{d}}
\newcommand{\w}{\mathfrak{w}}
\DeclareMathOperator*{\supp}{supp}
\DeclareMathOperator*{\TV}{TV}
\DeclareMathOperator*{\BV}{BV}
\DeclareMathOperator*{\PC}{PC}
\DeclareMathOperator*{\BN}{BN}
\crefname{hypothesis}{Hypothesis}{Hypotheses}
\title{A Note on the Regularity of Images Generated by Convolutional Neural Networks}
\author{Andreas Habring\thanks{Department of Mathematics and Scientific Computing, University of Graz, Austria
  (\email{andreas.habring@uni-graz.at}).}
\and Martin Holler\thanks{Department of Mathematics and Scientific Computing, University of Graz, Austria
  (\email{martin.holler@uni-graz.at}, \url{https://imsc.uni-graz.at/hollerm/}).}}
\definecolor{myblue}{RGB}{30, 76, 132}
\begin{document}

\maketitle

\begin{abstract}
The regularity of images generated by convolutional neural networks, such as the U-net, 
generative networks, or the deep image prior, is analyzed.
In a resolution-independent, infinite dimensional setting, it is shown that such images, represented as functions, are always continuous and, in some circumstances, even continuously differentiable, contradicting the widely accepted modeling of sharp edges in images via jump discontinuities. While such statements require an infinite
dimensional setting, the connection to (discretized) neural networks used in practice is made by considering the limit
as the resolution approaches infinity. 
As practical consequence, the results of this paper in particular provide analytical evidence that basic $L^2$ regularization of network weights might lead to over-smoothed outputs.
\end{abstract}

\begin{keywords}
Convolutional neural networks, machine learning, functional analysis, mathematical imaging.
\end{keywords}

\begin{MSCcodes}
65J20 $\cdot$ 68U10 $\cdot$ 94A08
\end{MSCcodes}

\section{Introduction}
This paper is concerned with convolutional neural network (CNN) architectures frequently used in imaging and inverse problems, where images are obtained as the output of the CNN. %
We distinguish the following two types of models:
i) End-to-end imaging methods \cite{ronneberger2015u,jain2008natural,jin2017deep,isola2017image,mao2016image,mohan2019robust,ledig2017photo}, where CNNs represent an image transformation, such as a mapping from a noisy image to a clean one, or from an image to its segmentation map. ii) Generative approaches, such as generative adversarial networks (GANs) \cite{goodfellow2014generative}, which synthesize images from hidden latent variables. Besides the learning of image distributions and subsequent sampling \cite{bora2017compressed,skandarani2021gans}, the latter are also frequently used as priors in inverse imaging problems, see, e.g., \cite{deep_image_prior,obmann2019sparse,habring2022generative} or \cite{asim2020invertible,heckel2019denoising,heckel2019deep,hand2018phase,bora2017compressed} for works on \emph{non-convolutional} generators. For a review on CNNs used for the solution of inverse problems see \cite{arridge2019solving}.

Our focus lies in analyzing the regularity of images generated by such CNNs. We show that, when $L^2$ regularization of the convolution kernels is used in training \cite{genzel2022near,jiang2020two,dong2017automatic}, these images are at least continuous and, in some circumstances, even continuously differentiable. This is an undesirable property, since sharp edges represented by jump discontinuities are a key feature of natural images \cite{rudin1987images,chambolle2010introduction}. While such smoothing properties of CNNs are well-known in practice, our work provides the theoretical foundation for such empirical observations and consequentially suggests remedies: While for reasons of well-posedness, regularization is indispensable, our results suggest to refrain from using basic $L^2$ regularization of the network weights or to suitably adapt the network architecture when image data is the network output.

There are several works discussing regularity of CNNs as mappings between finite dimensional spaces \cite{mallat2016understanding,rahaman2019spectral} and on their noise removal performance \cite{heckel2019denoising}. There is, however, hardly any literature on CNNs as mappings between function spaces \cite{habring2022generative,obmann2020deep} and we are not aware of works considering infinite resolution limits of existing network architectures. 

\paragraph{Outline of the paper}
Since statements about regularity of images are only meaningful in an infinite dimensional function-space setting, in \Cref{sec:continuous} we introduce a class of \emph{function space} CNNs for which the mentioned regularity results are provided. In \Cref{sec:discrete} we formally define the practically used discrete CNNs and then show that these converge to the function space CNNs as the resolution approaches infinity, establishing the connection of our results to applications. At last, providing also experimental results, in \Cref{sec:numerical_results} we relate our findings to the deep image prior \cite{deep_image_prior} and to end-to-end methods.

\section{Convolutional Neural Networks in Function Space}\label{sec:continuous}
In the following $\Omega, \Sigma \subset \mathbb{R}^2$ are always \emph{rectangular domains}, i.e, non-empty sets of the form $(a,b) \times (c,d)$ with $a,b,c,d \in \R$, with $0\in\Sigma$. We denote $\Omega_\Sigma \coloneqq \Omega-\Sigma = \left\{x- y\; \middle| \; x\in\Omega, \; y\in\Sigma \right\}$ and the Hölder conjugate exponent of $p \in [1,\infty]$ as $p' \in [1,\infty]$. For $\theta\subset \mathbb{R}^d$ and a function $f:\theta \rightarrow \R$ we denote its zero extension as $\tilde{f}$. We denote the indicator function on a set $M$ as $\Ic_M$, that is, $\Ic_M(x)=0$ if $x\in M$ and $\infty$ otherwise. The space of continuous and compactly supported functions is defined as  $C_c(\theta)\coloneqq \{f\in C(\theta)\;|\; \supp(f)\subset \theta\}$ where $\supp(f) = \overline{\{x\in \theta\;|\;f(x)\neq 0\}}$. Further, we define $C_0(\theta)$ as the closure of $C_c(\theta)$ with respect to the uniform norm, the space $(\mathcal{M}(\theta),\|\; .\; \|_\mathcal{M})$ as the dual of $C_0(\theta)$, and the space $(\mathcal{M}(\overline{\theta}),\|\; .\; \|_\mathcal{M})$ as the dual of $C(\overline{\theta})$. Note that $\Mc(\theta)$ (resp. $\Mc(\overline{\theta})$) coincides with the space of finite Radon measures and $\|\; .\; \|_\mathcal{M}$ with the total variation norm, whenever $\theta$ (resp. $\overline{\theta}$) is a locally compact, separable metric space \cite[Theorem 1.54, Remark 1.57]{bvfunctions}. We denote the space of functions of bounded variation on $\theta$ as $\BV(\theta)\coloneqq\{u\in L^1(\theta)\; | \; \D u \in \Mc(\theta)\}$, where $\D u$ denotes the distributional derivative. For $u\in \BV(\theta)$ we define the total variation $\TV(u)\coloneqq |\D u|(\theta)$; for details see \cite{bvfunctions}. We use the usual notation $W^{k,p}(\theta)$ for the Sobolev space of order k with exponent p and, in the case $p=2$, we denote $H^k(\theta)\coloneqq W^{k,p}(\theta)$.

The goal of this section is to provide an infinite dimensional model for CNNs which corresponds to the limit of discrete CNNs for infinite resolution. In order to do so, we have to introduce analogous versions of the typical building blocks of discrete CNNs in the infinite dimensional setting. Precisely, we will define the notions of convolution and reflection padding for functions and, building on these preliminaries, rigorously define function space CNNs in \Cref{defin:fscnn}. Afterwards we will prove our main regularity result in \Cref{thm:regularity}.

\begin{defprop}{(Convolution)}\label{lem_conv_measures}\
Let $p\in (1,\infty]$, $w\in L^{p}(\Sigma)$ and $u\in L^1(\Omega_\Sigma)$. We define the (valid) convolution $u*w\in L^p(\Omega)$ as
\begin{equation}\label{eq_lem_conv_meas}
\begin{gathered}
u*w(x) = \int_\Sigma w(y)u(x-y) \wrt y.
\end{gathered}
\end{equation}
It holds that $\|u*w\|_p \leq \|u\|_1\|w\|_{p}$ and, if $u\in L^{p'}(\Omega_\Sigma)$, then $u*w\in C(\overline{\Omega})$.

\begin{proof}
The norm estimate follows from Young's inequality and the proof for $u*w\in C(\overline{\Omega})$ can be found in \cite[Theorem 3.14]{bredies2018mathematical}.
\end{proof}
\end{defprop}
\begin{remark}\label{rmk:conv_measure}
Via duality, one can also define the convolution for $u\in \Mc(\Omega_\Sigma)$ or $u\in \Mc(\overline{\Omega_\Sigma})$ \cite[Section 3.1]{Chambolle2020} such that $u*w\in L^p(\Omega)$ and $\|u*w\|_p \leq \|u\|_\Mc\|w\|_{p}$.
\end{remark}
In our function space model for CNNs we combine the convolution with reflection padding, which is also well defined for $L^p$ functions without the need of trace operators and preserves regularity. 
To this aim, consider first the domain $\theta=(a,b) \times (c,d)$. For a function $u:\overline{\theta}\rightarrow \R$, we define the reflection $\overline{u}$ on $[a,b]\times [c-(d-c),d]$ via $\overline{u}(x_1,x_2) = u(x_1,|x_2-c|+c)$. For a measure $\mu\in\Mc(\overline{\theta})$ we define the reflection $\overline{\mu}\in\Mc([a,b]\times [c-(d-c),d])$ via $\overline{\mu}(A) = \mu(A\cap \overline{\theta})+\mu(\{(x_1,(c-x_2)+c)\;|\; (x_1,x_2)\in A\cap [a,b]\times [c-(d-c),c)\})$. Repeating this reflection technique along different axes allows to extend $\overline{u}$ (resp. $\overline{\mu}$) to arbitrary large domains in $\R^2$. 
\begin{definition}{(Extensions by reflection)}
For $\Omega \supset \theta $, we define the extensions by reflection of $u:\overline{\theta} \rightarrow \R$ and $\mu \in \Mc(\overline{\theta})$ to $\Omega$ via the linear operator $\Rf_{\theta,\Omega}$ as
\begin{equation}
\Rf_{\theta,\Omega} u \coloneqq \overline{u}|_\Omega,\quad \Rf_{\theta,\Omega} \mu \coloneqq \overline{\mu}|_\Omega.
\end{equation}
\end{definition}
It's easy to see that if $u\in C(\overline{\theta})$ or $u\in L^p(\theta)$ then also $\Rf_{\theta,\Omega} u\in C(\overline{\Omega})$ or $\Rf_{\theta,\Omega} u\in L^p(\Omega)$, respectively. Further the reflection preserves the regularities $W^{1,p}$ \cite[Lemma 9.2]{brezis2011functional} and $\BV$ \cite[Proposition 3.21]{bvfunctions}.
\begin{remark}
Constant padding would also be feasible, but leads to a loss of regularity: While it preserves $\BV$ and $L^p$ regularity, it prevents continuity or $W^{1,p}$ regularity in general. In particular, this would preserve the result of case 1 of \Cref{thm:regularity} below, but weaken the result of case 2 of \Cref{thm:regularity} below to an output in $W^{1,p}(\overline{\Omega}) \cap C(\overline{\Omega})$.
\end{remark}

We can now define convolutional neural networks as mappings between function spaces.
\begin{definition}{(Function space CNN)}\label{defin:fscnn}
Let $L\in \N$ be a given number of layers, $\sigma: \R\rightarrow \R$ an activation function and $p\in (1,\infty]$.
Given further rectangular domains $\Omega_L=\Omega$ and $\Omega_l$, $\Sigma_l$,  $l=0,\ldots,L-1$ denote $\Omega_{l-1}' = \Omega_l-\Sigma_{l-1}$. Let the input $\mu\in \Mc(\overline{\Omega_0})$, the convolution kernels $w = (w^{l})_l\in L^{p}(\Sigma_0)\times\dots \times L^{p}(\Sigma_{L-1})\eqqcolon \Wc^{p} (\Sigma)$, and the biases $b=(b^l)_l\in\R^L$. We define the function space convolutional neural network \eqref{eq:continuous_neural_network} as $\NN: \Mc(\overline{\Omega_0})\times \Wc^p (\Sigma)\times \R^L\rightarrow L^p(\Omega)$
\begin{equation}\label{eq:continuous_neural_network}
\tag{fsCNN}
\begin{aligned}
\NN(\mu,w,b) &= (\Rf_{\Omega_{L-1},\Omega_{L-1}'}{\mu^{L-1}}) * w^{L-1}+b^{L-1} \quad\text{where}\\
\mu^0&=\mu\\
\mu^{l+1} &= \sigma((\Rf_{\Omega_l,\Omega_l'}{\mu^l}) * w^{l}+b^l),\; l=0,\dots , L-2
\end{aligned}
\end{equation}
where the activation function $\sigma$ is applied point-wise, that is, for a function $u$, $\sigma(u)(x)\coloneqq \sigma(u(x))$.
\end{definition}
\begin{remark}\
\begin{itemize}
\item For the sake of brevity, we use the following simplifications. We only use single-channel convolutions (one convolution filter per layer), the same activation function in all layers and no skip connections in the model. The first two assumptions can be generalized without affecting the analysis, while in case of skip connections with appropriate cropping, our theoretical results remain true if there is no connection from input to output of the network with less than two convolutional layers, which is usually the case in practice. Batch normalization, for which the necessary results can be found in \Cref{sec:appendix_miscellaneous}, \Cref{defin:batch_norm} and \Cref{lem:batch_norm}, can be included as well.

\item As shown in \Cref{sec:discrete} below, down- and upsampling layers, which are commonly used in practice, vanish for infinitely fine discretization and \eqref{eq:continuous_neural_network} is, indeed, a proper function space model for discrete CNNs.
\end{itemize}
\end{remark}

We refer the reader to \Cref{sec:appendix_regularity} for several smaller results concerning regularity in particular of the convolution and the application of the activation function. While versions of these results are rather standard, they play a crucial role in the proof of the following main regularity result \Cref{thm:regularity}.

\begin{theorem}(Regularity)\label{thm:regularity}
Given $\NN$ as in \eqref{eq:continuous_neural_network} with $\sigma$ Lipschitz continuous, $p=2$, and at least two convolutional layers. Then $\NN(\mu,w,b)\in C(\overline{\Omega})$ and, if additionally $\mu\in \BV(\Omega_0)$, we obtain $\NN(\mu,w,b)\in C^1(\overline{\Omega})$.
\begin{proof}
\textbf{Case 1:} If $\mu\in \Mc(\overline{\Omega_0})$ also $\Rf_{\Omega_0,\Omega_0'}{\mu}\in \Mc(\overline{\Omega_0'})$. Using Lipschitz continuity of $\sigma$, boundedness of all domains, and \cref{lem_conv_measures}, we find $\mu^1\in L^2(\Omega_1)$ and $\mu^2\in C(\overline{\Omega_2})$.

\textbf{Case 2:} If $\mu\in \BV(\Omega_0)$ also $\Rf_{\Omega_0,\Omega_0'}{\mu}\in \BV(\Omega_0')$. Application of \cref{lem:derivative_convolution} and \cref{lem:activation_regularity} yields $\mu^1 \in H^1(\Omega_1)$. Then, by \cref{lem_conv_measures}, the pre-activation $\hat{\mu}^2\coloneqq \Rf_{\Omega_1,\Omega_1'}{\mu^1} * w^{1}+b^1$ and its derivative $\D\hat{\mu}^2= (\D\Rf_{\Omega_1,\Omega_1'}{\mu^1}) * w^{1}$ are in  $C(\overline{\Omega_2})$. Thus, by \Cref{lem:continuous_weak_derivative}, $\hat{\mu}^2 \in C^1(\overline{\Omega_2})$.

In both cases, further layers of the network maintain the regularity, yielding the desired result.
\end{proof}
\end{theorem}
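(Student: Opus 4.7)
The plan is to exploit the smoothing effect of convolution with an $L^2$ kernel on bounded domains: since $p = p' = 2$, \cref{lem_conv_measures} produces continuity as soon as both factors lie in $L^2$, and the same mechanism applied to weak derivatives will produce continuous derivatives. I would therefore track the regularity of $\mu^l$ layer by layer, aiming to show that after the \emph{second} convolution the feature map is already continuous (Case 1) or continuously differentiable (Case 2); subsequent layers then only need to preserve what has been gained.

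For Case 1, the reflection of a measure stays in the measure space, so $\Rf_{\Omega_0,\Omega_0'}\mu \in \Mc(\overline{\Omega_0'})$, and \cref{rmk:conv_measure} places the first pre-activation in $L^2(\Omega_1)$. Lipschitz continuity of $\sigma$ combined with boundedness of $\Omega_1$ and the bound $|\sigma(t)| \leq |\sigma(0)| + L|t|$ keeps $\mu^1 \in L^2(\Omega_1)$, and reflection preserves $L^2$. Thus at the second layer I apply \cref{lem_conv_measures} with $\Rf_{\Omega_1,\Omega_1'}\mu^1 \in L^{p'}(\Omega_1')$ against $w^1 \in L^p(\Sigma_1)$ (using $p = p' = 2$), obtaining continuity of the pre-activation on $\overline{\Omega_2}$, which $\sigma$ preserves.

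For Case 2 I would climb one rung of regularity higher. Reflection preserves $\BV$, and \cref{lem:derivative_convolution} should allow me to identify $\D(\Rf_{\Omega_0,\Omega_0'}\mu * w^0)$ with $(\D\Rf_{\Omega_0,\Omega_0'}\mu) * w^0$, which lies in $L^2(\Omega_1)$ by \cref{rmk:conv_measure}; together with $L^2$ control of the function itself this gives the pre-activation in $H^1(\Omega_1)$, and \cref{lem:activation_regularity} propagates $H^1$ through the Lipschitz activation so that $\mu^1 \in H^1(\Omega_1)$. At the second layer, reflection preserves $H^1$, and applying \cref{lem_conv_measures} once to $\Rf_{\Omega_1,\Omega_1'}\mu^1$ and once to its $L^2$ weak derivative shows that the pre-activation $\hat{\mu}^2$ \emph{and} its weak derivative are continuous on $\overline{\Omega_2}$. \cref{lem:continuous_weak_derivative} then upgrades this to $\hat{\mu}^2 \in C^1(\overline{\Omega_2})$.

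Finally, for layers $l \geq 2$ the required regularity is only preserved, not improved: reflection respects $C(\overline{\cdot})$ and $W^{1,p}$, convolution of an already continuous (hence $L^2$) feature with an $L^2$ kernel stays continuous by \cref{lem_conv_measures}, and Lipschitz $\sigma$ maintains continuity (in Case 2 via \cref{lem:derivative_convolution} and \cref{lem:activation_regularity} in addition). The last, unactivated layer inherits the same regularity, so both claims follow. The main obstacle I anticipate is the $\BV \to H^1$ step in Case 2: the valid convolution truncates contributions outside $\Omega_1$, so commuting the distributional derivative with the convolution has to be justified in the measure sense on the correct domain, and the preservation of $H^1$ (rather than merely $W^{1,1}$) under a Lipschitz activation is non-trivial. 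Both of these difficulties are exactly what the appendix lemmas are built to dispatch, and their careful invocation is the heart of the argument.
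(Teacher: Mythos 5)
Your proposal is correct and follows essentially the same route as the paper's own proof: Case 1 via $\Mc \to L^2 \to C$ using \cref{lem_conv_measures} with $p=p'=2$, Case 2 via $\BV \to H^1 \to C^1$ using \cref{lem:derivative_convolution}, \cref{lem:activation_regularity}, and \cref{lem:continuous_weak_derivative}, with later layers only preserving the gained regularity. Your added detail on the Lipschitz bound for $\sigma$ and on commuting the derivative with the valid convolution is exactly what the cited appendix lemmas supply.
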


\begin{remark}(Input regularity)\
\begin{itemize}
\item In end-to-end applications the CNN is typically fed the given corrupted data. For instance in the case of Gaussian denoising the input is the corrupted image, deteriorated by additive Gaussian noise which is usually assumed to be contained in $L^2(\Omega_0)$. This is also reflected in the wide spread application of the 2-norm -- or mean squared error (MSE) loss -- for the data fidelity in denoising tasks \cite{zhang2018ffdnet}. Our general model is even less restrictive, allowing $\mu\in\Mc(\overline{\Omega_0})$ to be a Radon measure as in \cite{habring2022generative}. This way, the input of the network can even be, e.g., a Dirac delta \cite[Sections 2,3]{Chambolle2020}. \Cref{thm:regularity} guarantees a continuous output in this case. Let us briefly elaborate on the advantages of allowing Radon measure valued inputs. In many applications -- especially (convolutional) sparse coding based methods \cite{Chambolle2020,zhang2017convolutional} -- the sparsity inducing 1-norm penalty is applied to the input in the discrete setting. Using the space $L^1$ as an infinite dimensional model will, however, often lead to a minimization problem which cannot be proven to admit a solution due to the analytically disadvantageous nature of $L^1$. The preferred infinite dimensional model for the discrete 1-norm therefore is the total variation norm and the corresponding function space the space of Radon measures which, contrary to $L^1$, is the dual of a separable Banach space and thus allows for the application of the theorem of Banach-Alaoglu. For an example of such a model with proven existence of minimizers we refer the reader to \cite{habring2022generative}.
\item On the other hand, in some cases, such as end-to-end image segmentation, the input of the CNN is a clean image. While there is some contrary evidence \cite{gousseau2001natural}, the space of functions of bounded variation $\BV(\Omega_0)$ as a model for natural images has been the starting point of many successful imaging methods \cite{chan1998total,rudin1987images,rudin1992nonlinear,chambolle2010introduction}, in particular any method utilizing total variation regularization, as $\BV$ is the natural domain of definition of $\TV$ \cite{bvfunctions}. Therefore in case 2 of \Cref{thm:regularity} we consider $\mu\in \BV(\Omega_0)$ showing that in this setting the output can even be expected to be of $C^1$ regularity.
\end{itemize}
\end{remark}

\begin{remark}(Higher or lower output regularity)\
\begin{itemize}
\item Without further assumptions one cannot expect to obtain higher regularity, that is, there is no $s>0$ such that the convolution of any function in $H^k$ with any function in $L^2$ is in $H^{k+s}$: A function $u\in  L^2((-L,L)^2)$ is in $H^k((-L,L)^2)$ if and only if $\sum_{n\in\Z^2} (1+|n|^2)^k |\hat{u}_n|^2<\infty$, where $(\hat{u}_n)_n$ are the Fourier coefficients of $u$. Let $s>0$ be arbitrary and consider $\hat{u}_{n_m}=\frac{1+|m|^2}{(1+|n_m|^2)^{\frac{k+s}{2}}}$ and $\hat{u}_l=0$ for $l\neq n_m$ where $n_m=(n_1(m_1),n_2(m_2))$ is a subsequence increasing fast enough such that $u\in H^k((-L,L)^2)$. Let $w \in L^2((-L,L)^2)$, $\hat{w}_{n_m}=1/(1+|m|^2)$ and $\hat{w}_l=0$ for $l\neq n_m$. For the convolution with periodic boundary conditions we obtain $(\widehat{u*w})_{n_m}=4L^2 \hat{u}_{n_m}\hat{w}_{n_m}=4L^2/(1+|n_m|^2)^{\frac{k+s}{2}}$ and therefore $u*w\notin H^{k+s}((-L,L)^2)$.
\item Naturally, \Cref{thm:regularity} indicates strategies to decrease the regularity of the output. The crucial gain in regularity results from the last case in \Cref{lem_conv_measures} which states that the convolution $u*w$ of two functions $u\in L^{p'}$, $w\in L^p$ from dual $L^p$ spaces is continuous. If, however, $\frac{1}{p}+\frac{1}{q}>1$ continuity of the convolution cannot be expected in general. Thus, if we choose $p\in (1,2)$ in \Cref{defin:fscnn}, the output of \eqref{eq:continuous_neural_network} will frequently not be continuous. For $p>2$ the results remain true due to the natural embedding $L^{p_1}\subset L^{p_2}$ for $p_1>p_2$ on bounded domains. Choosing $p=1$ will lead to several complications down the road. Most prominently, issues will arise when investigating training of the CNN as $L^1$ is not reflexive, which is a necessary ingredient for the existence of a solution to the training problem, cf. \Cref{lem:gamma_convergence}.
\item Reducing regularity even further, one could also model the weights as Radon measures, for which the convolution can be generalized via the identification of the space of Radon measures with the dual of $C_0$. The convolution of two Radon measures then turns out to be a radon measure again. This route, however, leads to complications regarding the application of an activation function $\sigma$, as it is unclear, how to generalize the point-wise application of $\sigma$ to measures instead of functions.
\end{itemize}
\end{remark}

\section{Application to Neural Networks Operating on Pixel Grids}\label{sec:discrete}

In this section we bridge the gap between the above introduced function space CNNs and the discrete CNNs used in practice. To do this we first show that discrete CNNs can be regarded as finite dimensional approximations of function space CNNs of a specific structure. Afterwards we consider the limit of these approximations for infinite resolution.

\subsection{Discrete Convolutional Neural Networks}
We discretize $\R^2$ with a grid of resolution $h>0$. For matrices $U \in \R^{N \times M}$, $W \in \R^{n \times m}$, we denote the \emph{valid} convolution as $U*W \in \R^{(N - n + 1  ) \times ( M - m + 1 ) }$, $(U*W)_{i,j}=\sum_{k,l}h^2U_{i+n-k,j+m-l}W_{k,l}$. The extension by reflection in the discrete case is defined analogously to the continuous case (see \cite[Section 3.3.3]{bredies2018mathematical}) and denoted by an operator $\Rf$. We define a downsampling operator by a factor $s\in\N$ with an averaging function $f:\R^{s\times s}\rightarrow \R$ as $\Df_s: \R^{sN\times sM} \rightarrow \R^{N\times M}$
\begin{equation}
\begin{aligned}
(\Df_s(U))_{i,j} &= f( (U_{si+k,sj+l})_{k,l=0}^{s-1} ).
\end{aligned}
\end{equation}
Examples for different choices of $f$ are max pooling or average pooling. Upsampling by a factor $s\in\N$ with an interpolation function $g:\R^{2\times 2}\times \rightarrow \R^{s\times s}$ is defined as $\Uf_s: \R^{N\times M} \rightarrow \R^{Ns\times Ms}$
\begin{equation}
\begin{aligned}
((\Uf_s(U))_{si+k,sj+l})_{k,l=0}^{s-1} &= g( U_{i,j},U_{i+1,j},U_{i,j+1},U_{i+1,j+1}),
\end{aligned}
\end{equation}
where we use constant extension of $U$ at the boundary. Examples for different choices of $g$ are constant upsampling or bilinear upsampling.

\begin{definition}{(Discrete CNN)}
Let $L  \in \N$ be the number of layers and $ N_l,M_l$ and $n_l,m_l \in \N$ for $l=0,\ldots,L-1$ the size of the activations and convolution kernels, respectively, at the $l$-th layer. To this aim let $(N_{l-1}',M_{l-1}') = (N_l + n_{l-1} - 1, M_l+m_{l-1}-1)$ and $\Rf_l:\R^{N_l\times M_l}\rightarrow\R^{N_l'\times M_l'}$ be the operator performing extension by reflection to the wanted size. We define the discrete CNN $\NN_d: \R^{N_0\times M_0}\times \prod_l \R^{n_l\times m_l} \times \R^L \rightarrow \R^{N_L\times M_L}$,
\begin{equation}\label{eq:discrete_neural_network}
\tag{dCNN}
\begin{aligned}
\NN_d(U,W,b) &= \Df_{s_{L-1,D}}((\Rf_{L-1}{\Uf_{s_{L-1,U}}U^{L-1}}) * W^{L-1}+b^{L-1})\\
U^{1} &= \Df_{s_{0,D}}\sigma((\Rf_0{U}) * W^0+b^0)\\
U^{l+1} &= \Df_{s_{l,D}}\sigma((\Rf_l{\Uf_{s_{l,U}}U^l}) * W^l+b^l),\; l=1,\ldots,L-2.\\
\end{aligned}
\end{equation}
\end{definition}
The down- or upsampling in each layer can be omitted by setting $s_{l,D}$ or $s_{l,U}$ to one, respectively. The frequently used strided convolutions can be realized in this model as a regular convolution with subsequent downsampling.

\subsection{A Finite Dimensional Approximation} Inspired by a finite element type approach, we now derive a finite dimensional approximation of \eqref{eq:continuous_neural_network} which is equivalent to \eqref{eq:discrete_neural_network}. We denote the grid points $x^h_{i,j} = (ih,jh)$ for $(i,j)\in\mathbb{Z}^2$ and the squares $Q^h_{i,j} = x^h_{i,j} + [-h/2, h/2)^2$. Further we define $\PC^h(\R^2)$ as the space of functions $\R^2\rightarrow \R$ that are constant on $Q^h_{i,j}$ for all $i,j$ and, for an open set $\theta\subset\R^2$, $\PC^h(\theta) = \{u|_\theta\; | \; u\in \PC^h(\R^2)\}$.
We assume that the height and width of all occurring domains $\theta$ are integer multiples of the resolution $h$. For functions $u$ that are defined pointwise, we introduce the linear projection $P^hu$ onto $\PC^h(\theta)$ as%
\begin{equation}\label{eq:projection_pc}
\begin{aligned}
P^hu = \sum_{x^h_{i,j}\in\theta} u(x^h_{i,j}) \; \chi_{Q^h_{i,j}}.
\end{aligned}
\end{equation}
Note that $\sup_x |P^hu(x)|\leq \sup_x |u(x)|$ and that, if $u$ is uniformly continuous, $P^h u\rightarrow u$ uniformly as $h\rightarrow 0$. As a consequence $\PC^h(\theta)$ is dense in $L^p(\theta)$ for all $1\leq p<\infty$ if $\theta$ is bounded and we use these spaces as approximation spaces for $L^p(\theta)$. Although, for bounded $\theta$, $\PC^h(\theta)$ is finite dimensional, it is still a space of continuously defined functions and therefore amenable to the \emph{continuous} convolution as defined in \Cref{lem_conv_measures}. We define a downsampling operator $\Df_s^h: \PC^h(\theta) \rightarrow \PC^{sh}(\theta)$
\begin{equation}
\begin{aligned}
u^h=\sum_{k,l} U^h_{k,l} \;\chi_{Q^h_{k,l}} &\mapsto \sum_{k,l} \Df_s(U^h)_{k,l} \; (\sum_{i,j=0}^{s-1}\chi_{Q^h_{sk+i,sl+j}}).
\end{aligned}
\end{equation}
and an upsampling operator as $\Uf_s^{sh}: \PC^{sh}(\theta) \rightarrow \PC^{h}(\theta)\\$
\begin{equation}
\begin{aligned}
u^h = \sum_{k,l} U^h_{k,l} \;(\sum_{i,j=0}^{s-1}\chi_{Q^h_{sk+i,sl+j}})&\mapsto \sum_{k,l} \sum_{i,j=0}^{s-1}\Uf_s(U^h)_{sk+i,sl+j} \chi_{Q^h_{sk+i,sl+j}}.
\end{aligned}
\end{equation}
\begin{definition}{(Approximated CNN)}
With $\Wc_h(\Sigma)\coloneqq \PC^{h_0}(\Sigma_0)\times\dots \times \PC^{h_{L-1}}(\Sigma_{L-1})$ where $h_l$ is the resolution at layer $l$ we define a finite dimensional approximation with resolution $h$ of \eqref{eq:continuous_neural_network}, $\NN_h: \PC^{h_0}(\Omega_0)\times \Wc_h(\Sigma)\times\R^L\rightarrow L^2(\Omega)$ as
\begin{equation}\label{eq:approx_neural_network}
\tag{aCNN}
\begin{aligned}
\NN_h(u^h,w^h,b^h) &= \Df^{h_{L-1}}_{s_{{L-1},D}}P^{h_{L-1}}((\Rf_{\Omega_{L-1},\Omega_{L-1}'}\Uf^{s_{{L-1},U}h_{L-1}}_{s_{{L-1},U}}u^{h,{L-1}}) * w^{h,{L-1}}+b^{h,{L-1}})\\
u^{h,1} &= \Df^{h_0}_{s_{0,D}}P^{h_0}\sigma((\Rf_{\Omega_0,\Omega_0'}u^h) * w^{h,0}+b^{h,0})\\
u^{h,l+1} &= \Df^{h_l}_{s_{l,D}}P^{h_l}\sigma((\Rf_{\Omega_l,\Omega_l'}\Uf^{s_{l,U}h_l}_{s_{l,U}}u^{h,l}) * w^{h,l}+b^{h,l}),\; l=1,\ldots,L-2.\\
\end{aligned}
\end{equation}
\end{definition}
Note that the resolutions on different layers can be computed via $h_l = h_0\Pi_{i=1}^l\frac{s_{i-1,D}}{s_{i,U}}$ for $l\geq 1$, which follows from the definitions of the up- and downsampling operators. To relate \eqref{eq:approx_neural_network} and \eqref{eq:discrete_neural_network} let us first consider the convolution. With $u^h = \sum_{k,l} U^h_{k,l} \;\chi_{Q^h_{k,l}}$ and $w^h = \sum_{k,l} W^h_{k,l}\; \chi_{Q^h_{k,l}}$ the continuous convolution $u^h*w^h$ evaluated at the grid point $x_{n,m}^h$ reads as
\begin{equation}
\begin{aligned}
u^h*w^h(x^h_{n,m}) &= \sum_{k,l,i,j} U^h_{k,l}W^h_{i,j} \chi_{Q^h_{k,l}}*\chi_{Q^h_{i,j}}\\
&= \sum_{k,l,i,j} U^h_{k,l}W^h_{i,j} \underbrace{\int \chi_{Q^h_{k,l}}(x^h_{n,m}-y)*\chi_{Q^h_{i,j}}(y)\; \wrt y}_{= \begin{cases}
h^2\quad \text{ if }x^h_{n,m}-x^h_{i,j} = x^h_{k,l}\\
0\quad \text{ otherwise}
\end{cases} }
= h^2 \sum_{i,j} U^h_{n-i,m-j}W^h_{i,j}.
\end{aligned}
\end{equation}
Hence, $u^h*w^h(x^h_{n,m}) = (U^h*W^h)_{n,m}$, where the right-hand side denotes the discrete convolution. Since $u^h*w^h$ is also defined between the sampling points $x^h_{n,m}$ and is actually a continuous function and not piecewise constant anymore we need to incorporate the projection \eqref{eq:projection_pc} after each convolution in \eqref{eq:approx_neural_network}. Given $\Omega_l=(a_l,b_l)\times (c_l,d_l)$, $\Sigma_l = (p_l,q_l)\times (r_l,s_l)$, and $h_l$, defining $N_l = \frac{c_l-d_l}{h_l}$, $M_l = \frac{b_l-a_l}{h_l}$, $n_l = \frac{s_l-r_l}{h_l}$, $m_l = \frac{p_l-q_l}{h_l}$ we find $\NN_h(u^h,w^h,b^h)(x^{h_L}_{n,m})=\NN_d(U,W,b^h)_{n,m}$ for all grid points $x_{n,m}^{h_L}$ if the output of the net has resolution $h_L$.

\subsection{Approximation Results} In order to relate \eqref{eq:approx_neural_network} and \eqref{eq:continuous_neural_network} we now consider the limit $h\rightarrow 0$. To this aim, we pose an intuitive and non-restrictive assumption on the down- and upsampling operators used in the CNN, which is satisfied, for instance, for downsampling with strides, max pooling, or average pooling and piecewise constant or bilinear upsampling.
\begin{assumption}\label{ass:sampling}
The down- and upsampling functions $f: \R^{s\times s}\rightarrow \R$ and $g: \R^{2\times 2}\rightarrow \R^{s\times s}$ satisfy 
\[\min_{i,j}U_{i,j}\leq f(U)\leq \max_{i,j}U_{i,j} \quad \text{for all } U\in \R^{s\times s}\]
\[ \min_{i,j}U_{i,j} \leq \min_{k,l} g(U)_{k,l} \leq \max_{k,l}g(U)_{k,l} \leq \max_{i,j}U_{i,j} \quad \text{for all } U\in \R^{2\times 2}.\]
\end{assumption}
Before proving convergence of \eqref{eq:approx_neural_network} for fine discretizations we provide several results about convergence of its individual components.

\begin{lemma}\label{lem:cont_padding}
Let $\theta$ and $\Omega$ be rectangular domains in $\R^2$ and $p \in [1,\infty]$. Then, if $u_n\rightarrow u$ in $C(\overline{\theta})$, then $\Rf_{\theta,\Omega}u_n\rightarrow \Rf_{\theta,\Omega}u$ in $C(\overline{\Omega})$ and if $u_n\rightarrow u$ in $L^p(\theta)$, then $\Rf_{\theta,\Omega}{u_n}\rightarrow \Rf_{\theta,\Omega}{u}$ in $L^p(\Omega)$.
\end{lemma}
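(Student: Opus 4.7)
The approach is to exploit the fact that $\Rf_{\theta,\Omega}$ is a linear operator, so it suffices to establish that it is bounded as a map $C(\overline{\theta})\to C(\overline{\Omega})$ and $L^p(\theta)\to L^p(\Omega)$, respectively; the convergence statements are then immediate.

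First, for the $C(\overline{\theta})$ case, I would unpack the construction of the reflection. By the recursive definition given just before the statement (reflect across the bottom edge of $\theta$, then iterate along the other axes to cover $\Omega$), for every $x\in\overline{\Omega}$ there is a point $\phi(x)\in\overline{\theta}$ with $\Rf_{\theta,\Omega}u(x)=u(\phi(x))$. This yields the pointwise estimate $\|\Rf_{\theta,\Omega}u\|_{C(\overline{\Omega})}\le\|u\|_{C(\overline{\theta})}$. Together with the already-noted fact that reflection preserves continuity, linearity of $\Rf_{\theta,\Omega}$ gives
\[
\|\Rf_{\theta,\Omega}u_n-\Rf_{\theta,\Omega}u\|_{C(\overline{\Omega})}=\|\Rf_{\theta,\Omega}(u_n-u)\|_{C(\overline{\Omega})}\le\|u_n-u\|_{C(\overline{\theta})}\to 0,
\]
which settles the first part.

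Second, for the $L^p$ case, I would use the same reflection-driven covering of $\Omega$. Since $\Omega$ is rectangular and bounded, finitely many axis-aligned reflected copies $\theta_1,\dots,\theta_N$ of $\theta$ suffice to cover $\overline{\Omega}$; on each such piece, $\Rf_{\theta,\Omega}u$ is (up to a measure-preserving reflection $\phi_k\colon\theta_k\to\theta$) given by $u\circ\phi_k$, so
\[
\int_\Omega |\Rf_{\theta,\Omega}u|^p\wrt x\le\sum_{k=1}^N\int_{\theta_k}|u\circ\phi_k|^p\wrt x=N\int_\theta|u|^p\wrt x,
\]
for $p<\infty$ (with the obvious essential-supremum version for $p=\infty$). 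Hence $\|\Rf_{\theta,\Omega}u\|_{L^p(\Omega)}\le N^{1/p}\|u\|_{L^p(\theta)}$, and linearity again converts this operator bound into the desired continuity with respect to the $L^p$ topology.

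The only real obstacle is bookkeeping: one must verify that the iterative reflection scheme produces a finite covering of $\overline{\Omega}$ by congruent copies of $\theta$ and that the maps $\phi_k$ are measure-preserving (they are, since they are compositions of affine reflections). Both facts follow directly from the construction given in the paragraph defining $\overline{u}$, so no additional technical machinery beyond elementary measure theory and the definitions is required.
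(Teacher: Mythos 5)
Your proof is correct and follows the same route the paper intends: the paper's own proof is just the one-line remark that the result is ``a simple consequence of the properties of the extension by reflection,'' and the properties it has in mind are exactly the linearity and the operator bounds ($\|\Rf_{\theta,\Omega}u\|_{C(\overline{\Omega})}\le\|u\|_{C(\overline{\theta})}$ and $\|\Rf_{\theta,\Omega}u\|_{L^p(\Omega)}\le N^{1/p}\|u\|_{L^p(\theta)}$ via the finite covering by measure-preserving reflected copies) that you spell out. Your writeup simply supplies the details the paper omits.
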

\begin{proof}
This result is a simple consequence of the properties of the extension by reflection.
\end{proof}

\begin{lemma}{(Strong/weak to uniform continuity of the convolution)}\label{lem:continuity_convolution}
Let $p\in (1,\infty )$ and $u_n\rightarrow u$ in $L^p(\Omega_\Sigma)$ and $w_n\rightharpoonup w$ in $L^{p'}(\Sigma)$. Then $u_n*w_n \rightarrow u*w$ uniformly in $\Omega$.
\begin{proof}
First note that convergence of the sequence in $L^p$ follows from
\begin{equation*}
\begin{aligned}
\| u_n*w_n-u*w\|_p &\leq\| (u_n-u)*w_n \|_p + \| u*(w_n-w)\|_p\\
&\leq \|u_n-u\|_p \|w_n\|_{1} + \| \int_\Sigma u(.-y)*(w_n(y)-w(y))\;\wrt y\|_p\rightarrow 0,
\end{aligned}
\end{equation*}
where the first term in the last line goes to zero by convergence of $(u_n)_n$ and boundedness of $(w_n)_n$, and convergence to zero of the second term follows from Lebesgue's dominated convergence theorem using the weak convergence of $(w_n)_n$. We now extend this to uniform convergence using the Arzel\`{a}-Ascoli theorem, \cite[Satz 4.12]{alt2016linear} on $C(\overline{\Omega})$. In order to do so, we show uniform boundedness and uniform equicontinuity of $(u_n*w_n)_n$. The former is immediate since, by Hölder's inequality, $|u_n*w_n(x)| \leq \| u_n\|_p \|w_n\|_{p'}$ with the right-hand-side being bounded by convergence of the sequences. To show uniform equicontinuity, let $\epsilon>0$ be arbitrary, set $M\coloneqq \max_n \|w_n\|_{p'}$ and pick $n_0$ such that $\|u_{n}-u\|_p<\epsilon/(4M)$ for all $n>n_0$. Further choose $\delta>0$ such that, for $|h|<\delta$, $\|\tilde{u}(.+h)-\tilde{u}\|_p<\frac{\epsilon}{2M}$ and $\|\tilde{u}_n(.+h)-\tilde{u}_n\|_p<\frac{\epsilon}{M}$ for all $n\in\{0,1,\dots, n_0\}$ which is possible since the translation $v\mapsto v(.+h)$ is continuous from $L^p(\R^d) $ to $ L^p(\R^d)$. For $|h|<\delta$ we compute
\begin{equation*}
\begin{aligned}
|u_n*w_n(x+h)-u_n*w_n(x)| &\leq
\int_\Sigma |u_n(x+h-y)-u_n(x-y)|\;|w_n(y)|\; \wrt y \\
&\leq \|\tilde{u}_n(.+h)-\tilde{u}_n\|_p \|w_n\|_{p'} = (*)
\end{aligned}
\end{equation*}
which is upper bounded by $\epsilon$ in the case $n\leq n_0$. In the case $n>n_0$ we obtain
\begin{equation*}
\begin{aligned}
(*) \leq M (2\|u_n-u\|_p + \|\tilde{u}(.+h)-\tilde{u}\|_p) \leq \epsilon.
\end{aligned}
\end{equation*}
The Arzel\`{a}-Ascoli theorem now yields a subsequence $(u_{n_k}*w_{n_k})_k$ converging uniformly, and the limit has to be $u*f$ since uniform convergence implies $L^p$ convergence. Uniform convergence of the entire sequence can finally be obtained from uniqueness of the limit using a standard argument.
\end{proof}
\end{lemma}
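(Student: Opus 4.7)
The plan is to establish pointwise convergence of $(u_n*w_n)_n$ to $u*w$ on $\Omega$, then upgrade this to uniform convergence by applying Arzelà--Ascoli on $C(\overline{\Omega})$, after verifying uniform boundedness and uniform equicontinuity of the sequence.

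For pointwise convergence at a fixed $x \in \Omega$, I would split
\begin{equation*}
u_n*w_n(x) - u*w(x) = \int_\Sigma (u_n(x-y) - u(x-y)) w_n(y) \wrt y + \int_\Sigma u(x-y) (w_n(y) - w(y)) \wrt y.
\end{equation*}
By Hölder's inequality the first integral is bounded by $\|u_n-u\|_p \|w_n\|_{p'}$, which tends to zero since a weakly convergent sequence is norm-bounded and $u_n \to u$ strongly. The second integral tends to zero because $y \mapsto u(x-y)$ lies in $L^p(\Sigma)$ (using $x \in \Omega$, $y \in \Sigma$, so $x-y \in \Omega_\Sigma$) and is therefore a legitimate test function for the weak convergence $w_n \rightharpoonup w$ in $L^{p'}(\Sigma)$. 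Uniform boundedness of $(u_n*w_n)_n$ in $C(\overline{\Omega})$ is immediate from Hölder together with boundedness of $\|u_n\|_p$ and $\|w_n\|_{p'}$.

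The main obstacle is establishing uniform equicontinuity, which requires upgrading the standard $L^p$-continuity of translations to something uniform in $n$. For small $|h|$ and $x\in\Omega$ one has
\begin{equation*}
|u_n*w_n(x+h) - u_n*w_n(x)| \leq \|\tilde{u}_n(\cdot + h) - \tilde{u}_n\|_p \, \|w_n\|_{p'},
\end{equation*}
so I would control the first factor uniformly by the triangle-type estimate
\begin{equation*}
\|\tilde{u}_n(\cdot + h) - \tilde{u}_n\|_p \leq 2\|u_n - u\|_p + \|\tilde{u}(\cdot + h) - \tilde{u}\|_p.
\end{equation*}
Given $\epsilon>0$, fix $n_0$ so that the first summand is below $\epsilon/2$ for $n > n_0$, then choose $\delta>0$ such that $\|\tilde{u}(\cdot+h) - \tilde{u}\|_p < \epsilon/2$ and, simultaneously, $\|\tilde{u}_n(\cdot+h) - \tilde{u}_n\|_p < \epsilon$ for each of the finitely many $n \leq n_0$, using $L^p$-continuity of translation for each such $u_n$. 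Multiplying by $M := \sup_n \|w_n\|_{p'}<\infty$ yields uniform equicontinuity on $\overline{\Omega}$.

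Arzelà--Ascoli then extracts a subsequence converging uniformly in $C(\overline{\Omega})$; its limit must agree with the pointwise limit $u*w$, and a standard subsequence-of-subsequences argument promotes this to uniform convergence of the entire sequence. The only remaining bookkeeping concerns the zero extensions $\tilde{u}_n, \tilde{u}$ and the compatibility of translations with the bounded rectangular domains $\Omega, \Sigma, \Omega_\Sigma$, which is routine.
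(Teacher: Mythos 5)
Your proof is correct and follows essentially the same route as the paper: uniform boundedness via Hölder, uniform equicontinuity via the translation estimate $\|\tilde{u}_n(\cdot+h)-\tilde{u}_n\|_p \leq 2\|u_n-u\|_p + \|\tilde{u}(\cdot+h)-\tilde{u}\|_p$ with the same split into finitely many small indices and a tail, then Arzelà--Ascoli plus uniqueness of the limit. The only (harmless) difference is that you identify the limit of the extracted subsequence through pointwise convergence, testing the weak convergence $w_n\rightharpoonup w$ against $y\mapsto u(x-y)\in L^p(\Sigma)$, whereas the paper first establishes $L^p$ convergence of $u_n*w_n$ via dominated convergence; both suffice.
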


\begin{lemma}\label{lem:Ph_bounded}
Let $\theta\subset\R^2$ be a bounded domain, $u^h, u\in C(\overline{\theta})$, and $u^h\rightarrow u$ uniformly as $h\rightarrow 0$. Then $P^h u^h\rightarrow u$ uniformly as $h\rightarrow 0$.
\end{lemma}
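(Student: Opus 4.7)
The plan is to split the error via the triangle inequality
\[
\|P^h u^h - u\|_\infty \;\leq\; \|P^h u^h - P^h u\|_\infty \;+\; \|P^h u - u\|_\infty
\]
and handle the two terms independently using the two properties of $P^h$ already recorded immediately after its definition in \eqref{eq:projection_pc}.

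For the first term I would exploit the fact that $P^h$ is linear and a contraction in the supremum norm, namely $\sup_x|P^h v(x)| \leq \sup_x|v(x)|$ for any bounded $v$ defined pointwise on $\overline{\theta}$. Applying this to $v = u^h - u$ gives
\[
\|P^h u^h - P^h u\|_\infty \;=\; \|P^h(u^h-u)\|_\infty \;\leq\; \|u^h-u\|_\infty,
\]
which tends to zero by hypothesis. For the second term I would invoke the other remark from just after \eqref{eq:projection_pc}: if $u$ is uniformly continuous then $P^h u \to u$ uniformly as $h \to 0$. Since $\theta$ is bounded and $u \in C(\overline{\theta})$, the function $u$ is uniformly continuous on the compact set $\overline{\theta}$, so this second term vanishes as well.

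Combining the two estimates yields $\|P^h u^h - u\|_\infty \to 0$, which is the claim. There is no real obstacle here; the statement is essentially a consequence of $P^h$ being a sup-norm contraction together with its pointwise consistency on uniformly continuous functions, and the whole argument amounts to one application of the triangle inequality.
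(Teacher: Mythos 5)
Your proof is correct and is essentially identical to the paper's own argument: a triangle inequality combined with the uniform bound $\|P^h\|\leq 1$ on $C(\overline{\theta})$ and the uniform convergence $P^h u \to u$ for the uniformly continuous $u$. Nothing further is needed.
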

\begin{proof}
Using the triangle inequality, this follows from $\|P^h\| \leq 1$ uniformly in $h$ when regarded as operator on $C(\overline{\theta})$, and from $P^h u \rightarrow u$ uniformly.
\end{proof}

\begin{lemma}\label{lem:sampling_bounded}
Let $\theta\subset\R^2$ be a bounded domain, $u^h \in \PC^h(\theta)$ and $u\in C(\overline{\theta})$ such that $u^h\rightarrow u$ uniformly as $h\rightarrow 0$. If $f$ and $g$ satisfy \Cref{ass:sampling} then $\Df_s^h u^h\rightarrow u$ and $\Uf_s^h u^h\rightarrow u$ uniformly as $h\rightarrow 0$.
\end{lemma}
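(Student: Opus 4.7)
The plan is to reduce both statements to the same elementary argument: the sampling assumption forces the output value at every point to lie between the minimum and maximum of $u^h$ over a \emph{small} local window, and the rest is the standard triangle-inequality trick that combines uniform convergence $u^h\to u$ with uniform continuity of $u$ on the compact set $\overline{\theta}$.

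First I would fix $\epsilon>0$ and use uniform continuity of $u$ on $\overline{\theta}$ (which holds since $\theta$ is bounded and $u\in C(\overline{\theta})$) to pick $\delta>0$ with $|u(x)-u(y)|<\epsilon/2$ whenever $|x-y|<\delta$. By the assumption $u^h\to u$ uniformly, I then choose $h_0>0$ small enough so that for all $h<h_0$ one has $\|u^h-u\|_\infty <\epsilon/2$ and, simultaneously, the diameter of both a block $\bigcup_{i,j=0}^{s-1}Q^h_{sk+i,sl+j}$ (relevant for downsampling) and a $2\times 2$ block of grid points $\{x^h_{i,j},x^h_{i+1,j},x^h_{i,j+1},x^h_{i+1,j+1}\}$ (relevant for upsampling) is bounded by $\delta$. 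Both diameters are $O(sh)$, respectively $O(h)$, so this is possible because $s$ is fixed.

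Next I would handle the downsampling statement. Take any $x\in\theta$; then $\Df_s^h u^h(x)=f(U)$ for some $s\times s$ block $U=(U^h_{sk+i,sl+j})_{i,j=0}^{s-1}$ of values of $u^h$ at grid points all lying within distance $\delta$ of $x$. By \Cref{ass:sampling} there exist grid points $y_-,y_+$ in this block with $u^h(y_-)\leq \Df_s^h u^h(x)\leq u^h(y_+)$. The two-step estimate
\[
|u^h(y_\pm)-u(x)|\;\leq\; |u^h(y_\pm)-u(y_\pm)|+|u(y_\pm)-u(x)|\;<\;\epsilon/2+\epsilon/2\;=\;\epsilon
\]
gives $u(x)-\epsilon < u^h(y_-)\leq \Df_s^h u^h(x)\leq u^h(y_+) < u(x)+\epsilon$, so $\|\Df_s^h u^h - u\|_\infty\leq \epsilon$ for $h<h_0$.

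For the upsampling statement the argument is essentially identical: $\Uf_s^h u^h(x)$ is one of the entries of $g(U)$ for a $2\times 2$ block of grid values of $u^h$ at points within distance $\delta$ of $x$, and the second half of \Cref{ass:sampling} sandwiches this entry between the minimum and maximum of the block. The same triangle-inequality estimate then yields $\|\Uf_s^h u^h-u\|_\infty\leq\epsilon$ for $h<h_0$. I do not anticipate a real obstacle here — everything is essentially bookkeeping — the only subtlety is to make sure the window over which one takes $\min/\max$ shrinks as $h\to 0$, which is why the boundedness of $s$ and of $\theta$, together with uniform continuity of the limit $u$, are exactly the right hypotheses.
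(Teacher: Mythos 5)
Your proposal is correct and follows essentially the same route as the paper's proof: both use \Cref{ass:sampling} to sandwich the sampled value between the minimum and maximum of $u^h$ over a local block of diameter $O(sh)$, and then combine uniform convergence $u^h\to u$ with uniform continuity of $u$ on $\overline{\theta}$ via the triangle inequality. The only cosmetic difference is that the paper first bounds $|u^h(x)-\Df_s^h u^h(x)|$ by the oscillation of $u^h$ over the block and then adds $\|u-u^h\|_\infty$, whereas you estimate $|\Df_s^h u^h(x)-u(x)|$ directly through the sandwiching points $y_\pm$; the ingredients and conclusion are identical.
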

\begin{proof}
We consider only the case of downsampling, since the proof for upsampling is analogous. Let $x\in\Omega$ be arbitrary and for fixed $h$, let $i,j$ such that $x\in Q^h_{i,j}$.
\begin{equation}\label{eq:sampling_bounded}
\begin{aligned}
|u^h(x)-\Df_s^h u^h(x)| &= |u^h(x^h_{i,j})-\Df_s^h u^h(x^h_{i,j})| = |u^h(x^h_{i,j})-f( (u^h(x^h_{k,l}))_{k,l} )|\\
&\underbrace{\leq}_{(i)} \max_{k,l} |u^h(x^h_{i,j})- u^h(x^h_{k,l}))| \underbrace{\leq}_{(ii)} \max_{|y-z|^2\leq 2(s-1)^2h^2} |u^h(y)-u^h(z)|\\
&\leq 2\|u^h-u\|_\infty + \max_{|y-z|^2\leq 2(s-1)^2h^2} |u(y)-u(z)| \rightarrow 0 \text{ as } h\rightarrow 0.
\end{aligned}
\end{equation}
Above, $(k,l)$ runs over all neighboring nodes of $(i,j)$ that are used for the computation of $\Df_k^h u^h(x_{i,j})$. In $(i)$ we use \Cref{ass:sampling} and in $(ii)$ we estimate $\max_{k,l}|x_{i,j}-x_{k,l}|^2$. The uniform convergence to zero follows from the uniform convergence of $u^h$ and the uniform continuity of $u$. Using \eqref{eq:sampling_bounded} we can estimate
\begin{equation*}
\begin{aligned}
\|u-\Df_s^h u^h\|_\infty &\leq \|u-u^h\|_\infty + \|u^h-\Df_s^h u^h\|_\infty\rightarrow 0.
\end{aligned}
\end{equation*} 
\end{proof}

Collecting the previous results we can prove convergence of the CNN output as follows.

\begin{theorem}\label{thm:CNN_cont}
Let $1<p<\infty$, $u\in L^p(\Omega_0)$, $w\in \Wc^{p'}(\Sigma)$, and $b\in \R^L$. Let $u^h$ and $w^h$ be piecewise constant approximations of $u$ and $w$, respectively, and $b^h\rightarrow b$. Then $\NN_h(u^h,w^h,b^h)\rightarrow \NN(u,w,b)$ uniformly.
\begin{proof}
It is clear that $u^h \rightarrow u$ in $L^p$ and $w^h \rightarrow w$ in $L^{p'}$ as $h \rightarrow 0$. 
By \Cref{lem:cont_padding} and \Cref{lem:continuity_convolution}, we obtain $(\Rf_{\Omega_0,\Omega_0'}u^h) * w^{h,0}+b^{h,0} \rightarrow (\Rf_{\Omega_0,\Omega_0'}u) * w^{0}+b^{0}$ uniformly. Further, by Lipschitz continuity of $\sigma$, we find that $v^h\rightarrow v$ uniformly implies $\sigma(v^h)\rightarrow\sigma(v)$ uniformly. Using this, \Cref{lem:cont_padding}, \Cref{lem:continuity_convolution}, \Cref{lem:Ph_bounded}, \Cref{lem:sampling_bounded}, and continuity of the reflection padding we can pass the uniform convergence through the network to obtain the desired result.
\end{proof}
\end{theorem}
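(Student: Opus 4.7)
The plan is to prove the claim by induction on the layer index $l$, propagating a notion of convergence that strengthens after the first convolutional layer. More precisely, the induction hypothesis at layer $l \geq 1$ will be that $u^{h,l} \to \mu^l$ uniformly on $\overline{\Omega_l}$, while the base case has to bridge from $L^p$-convergence of the input to uniform convergence after the first convolution.

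\textbf{Base case.} Since $u^h$ and $w^{h,0}$ are piecewise constant approximations on progressively finer grids, standard density arguments give $u^h \to u$ in $L^p(\Omega_0)$ and $w^{h,0} \to w^0$ in $L^{p'}(\Sigma_0)$. By \Cref{lem:cont_padding}, the reflection padding preserves $L^p$ convergence, so $\Rf_{\Omega_0,\Omega_0'} u^h \to \Rf_{\Omega_0,\Omega_0'} u$ in $L^p(\Omega_0')$. Now I would apply \Cref{lem:continuity_convolution} (using that strong convergence implies weak convergence of $w^{h,0}$) to obtain $(\Rf_{\Omega_0,\Omega_0'} u^h) * w^{h,0} \to (\Rf_{\Omega_0,\Omega_0'} u) * w^0$ uniformly on $\overline{\Omega_1}$. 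Adding the converging biases $b^{h,0} \to b^0$ and composing with the Lipschitz $\sigma$ preserves uniform convergence. Finally, \Cref{lem:Ph_bounded} shows $P^{h_0}$ preserves uniform convergence of continuous functions and \Cref{lem:sampling_bounded} handles the downsampling $\Df^{h_0}_{s_{0,D}}$, yielding $u^{h,1} \to \mu^1$ uniformly.

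\textbf{Inductive step.} Suppose $u^{h,l} \to \mu^l$ uniformly on $\overline{\Omega_l}$. The upsampling $\Uf^{s_{l,U}h_l}_{s_{l,U}}$ preserves uniform convergence by \Cref{lem:sampling_bounded}, and reflection padding preserves it by \Cref{lem:cont_padding}. Since all domains are bounded, uniform convergence implies $L^p$ convergence, so \Cref{lem:continuity_convolution} again yields uniform convergence of the convolution with $w^{h,l} \to w^l$. Bias addition, the Lipschitz activation, the projection $P^{h_l}$, and the downsampling $\Df^{h_l}_{s_{l,D}}$ all preserve uniform convergence by the same lemmas as before. Applying the same reasoning to the last layer (which omits the activation) concludes the induction and yields the claim.

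\textbf{Main obstacle.} The critical step is the first layer, where the mode of convergence must strengthen from $L^p$ (the natural topology for the data $u$) to uniform. This is exactly the content of \Cref{lem:continuity_convolution}, whose proof hinges on the Arzelà–Ascoli theorem, equicontinuity coming from $L^p$-continuity of translations, and uniform boundedness from Hölder's inequality. Once past this step, every remaining operation in the network is either Lipschitz or a projection/sampling operator bounded in the uniform norm, so the rest of the proof is a bookkeeping exercise of chaining the auxiliary lemmas layer by layer. A minor subtlety worth double-checking is that the layer-wise resolutions $h_l$ all tend to zero as $h = h_0 \to 0$, which is immediate from the formula $h_l = h_0\,\Pi_{i=1}^l\frac{s_{i-1,D}}{s_{i,U}}$ since the factors are fixed integers.
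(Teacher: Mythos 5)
Your proposal is correct and follows essentially the same route as the paper: chain \Cref{lem:cont_padding}, \Cref{lem:continuity_convolution}, \Cref{lem:Ph_bounded}, \Cref{lem:sampling_bounded}, and the Lipschitz continuity of $\sigma$ layer by layer, with the key upgrade from $L^p$ to uniform convergence happening at the first convolution via \Cref{lem:continuity_convolution}. Your explicit induction and the remark that all layer resolutions $h_l\to 0$ merely spell out details the paper leaves implicit.
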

\begin{remark}  \label{rem:CNN_cont}
\Cref{thm:CNN_cont} even holds under weak convergence $w^h\rightharpoonup w$ as $h \rightarrow 0$ instead of $w^h \rightarrow w$. Weak convergence arises naturally in the case of boundedness of the $L^{p'}$ norm of the weights $(w^h)_h$ by reflexivity of $L^{p'}$ and the theorem of Banach-Alaoglu.
\end{remark}

\subsection{Network Training} In this section, we consider training a CNN by empirical risk minimization and show that, also in this setting, \eqref{eq:approx_neural_network} converges to \eqref{eq:continuous_neural_network} in a meaningful sense. Let us start by briefly recalling three central definitions in the context of optimization. A functional $F:X \rightarrow \R\cup \{\infty\}$ defined on any space $X$ is called \emph{proper} if it is not equal to $\infty$ everywhere. If $X$ is a normed space, we call $F$ \emph{coercive} if $F(x)\rightarrow\infty$ whenever $\|x\|\rightarrow \infty$. Lastly, we say that $F$ is \emph{(weakly) lower semicontinuous} if for any sequence $(x_n)_n\subset X$ converging (weakly) to $x\in X$ it holds true that $F(x)\leq\liminf_{n\rightarrow\infty}F(x_n)$.

Let $p\in (1,\infty)$, $Y$ be a Banach space, $(u_j,y_j)_{j=1}^N\subset L^p(\Omega_0)\times Y$ be pairs of network input and ground truth data or label, and $\ell: Y\times Y\rightarrow [0,\infty]$ a loss functional. Incorporating also a possible forward operator $A:L^\infty(\Omega)\rightarrow Y$ we consider the following minimization problems.
\begin{equation}
\tag{\text{$P_h$}}
\begin{aligned}
\min_{\substack{w^h\in \Wc^{p'}(\Sigma)\\ b^h\in\R^L}} \Ec^h(y^h,u^h, w^h,b^h)\coloneqq \sum_{j=1}^N \ell(y^h_j, A\NN_h(u^h_j,w^h,b^h)) + \lambda\|w^h\|_{p'} +\\ \nu |b^h| + \Ic_{\Wc_h(\Sigma)}(w^h)\label{eq:discrete_training}
\end{aligned}
\end{equation}
\begin{equation}
\tag{P}
\begin{aligned}
\min_{\substack{w\in \Wc^{p'}(\Sigma)\\ b\in\R^L}} \Ec(y,u,w,b)&\coloneqq \sum_{j=1}^N \ell(y_j, A\NN(u_j,w,b)) + \lambda\|w\|_{p'} + \nu|b|\label{eq:cont_training}.
\end{aligned}
\end{equation}
Here $|\;.\;|$ denotes any norm on $\R^L$. Above, \eqref{eq:discrete_training} is the training problem in the approximated, finite dimensional setting and \eqref{eq:cont_training} the infinite dimensional limit. The data $u^h_j$ shall be a piecewise constant approximation of $u_j$ such that $u^h_j\rightarrow u_j$. Analogously $y^h_j$ shall be a meaningful approximation of $y_j$ in the finite dimensional setting. We do, however, not specify any convergence $y^h_j\rightarrow y_j$ and take this into account in the notion of $\Gamma$-convergence, see \Cref{def:gamma_convergence} below. By choosing the operator $A$ and the data $(u_j,y_j)_j$ accordingly the above problems cover training a CNN as an end-to-end method as well as learning a data distribution or using the CNN as an image prior for inverse problems.

\begin{assumption}\label{ass:training} Let the data $(u^h_j,y^h_j)_{h,j}$ and the operator $A$ be such that:\
\begin{enumerate}
\item $A\in \Lc(L^\infty(\Omega),Y)$.
\item\label{ass:proper} For all $h,j$, $\ell(y^h_j,\;.\;): Y\rightarrow [0,\infty]$ is lower semicontinuous and there exists ($w^h,b^h)\in \Wc_h(\Sigma)\times \R^L$ such that for all $j$, $\ell(y^h_j,A\NN_h(u^h_j,w^h,b^h))<\infty$.
\item\label{data_term_convergence} For all $j$ and any $z^h\rightarrow z$ in $Y$ it holds that $\ell(y^h_j,z^h)\rightarrow \ell(y_j,z)$.
\end{enumerate}
\end{assumption}

\begin{ex}{(Example losses)}\label{ex:data_fid}\

\begin{itemize}
\item A popular functional satisfying \Cref{ass:training} is the cross entropy loss for classification, cf. \cite{ronneberger2015u}. In this case, $y\in Y:= [L^\infty(\Omega)]^n$ for $n$ different classes with $y_i(x)\in\{0,1\}$ the true labels of class $i$, and $\ell(y,\;.\;): [L^\infty(\Omega)]^n \rightarrow [0,\infty)$, is given as
\begin{equation}
\begin{aligned}
\ell(y,v) &= -\int_\Omega g(x)\sum_{i=1}^n \log(\sigma(v(x))_i)y_i(x) \wrt x
\end{aligned}
\end{equation}
with $g\in L^\infty(\Omega)$, $g\geq 0$ a weight map to give different importance to certain pixels in the image and $\sigma$ the softmax function.
This functional satisfies \Cref{ass:training} whenever $y^h_j\rightarrow y_j$ in $[L^p(\Omega)]^N$ for any $p\in [1,\infty]$ since $\ell :[L^\infty(\Omega)]^n \times [L^\infty(\Omega)]^n \rightarrow [0,\infty)$ is continuous with respect to $L^p$ convergence in the first an $L^\infty$ convergence in the second argument. 
\item Another common loss satisfying  \Cref{ass:training} is the q norm discrepancy, that is, for $1\leq q <\infty$, $Y=L^q(\Omega)$, $\ell(y,z)=\frac{1}{q}\|z-y\|_q^q$ with a sequence $\PC^h(\Omega)\ni y^h\rightarrow y$ in $L^q(\Omega)$.
\end{itemize}
\end{ex}

In the following our goal is to deduce existence of a solution of the infinite dimensional problem \eqref{eq:cont_training} based on the discrete problems \eqref{eq:discrete_training} for $h\rightarrow 0$. In order to do so we will make use of equicoercivity and (weak) $\Gamma$-convergence defined below in \Cref{def:gamma_convergence}. Careful readers will notice, that equicoercivity and (weak) $\Gamma$-convergence resemble the definitions of coercivity and lower semicontinuity, respectively, but for a sequence instead of a single function. As coercivity and weak lower semicontinuity are sufficient conditions for existence of a minimizer of a proper function on a reflexive Banach space, it is not surprising that equicoercivity and weak $\Gamma$-convergence are precisely the corresponding sufficient conditions for existence of a minimizer of the limit of a sequence of functions in the sense of \Cref{lem:gamma_convergence}.

\begin{definition}{(Equicoercivity and $\Gamma$-convergence, \cite{braides2002gamma})}\label{def:gamma_convergence}
Let $X$ be a Banach space and $F,F_h:X\rightarrow\overline{\R}$.
\begin{itemize}
\item We call $(F_h)_h$ equicoercive if there exists a coercive function $\overline F:X\rightarrow\overline{\R}$ such that $\overline F\leq F_h$ for all $h$. 
\item We say that $F_h$ weakly $\Gamma$-converges to $F$ as $h\rightarrow 0$ if for every $x\in X$ we have that for every sequence $x^h\rightharpoonup x$, $F(x)\leq \liminf_{h\rightarrow 0}F_h(x^h)$ and there exists a sequence $x^h\rightharpoonup x$ such that $F(x)\geq \limsup_{h\rightarrow 0}F_h(x^h).$
\end{itemize}
\end{definition}

\begin{lemma}\label{lem:gamma_convergence}
Let $X$ be reflexive, for each $h$, $F_h: X\rightarrow \overline{\R}$ be proper and such that $(F_h)_h$ is equicoercive and weakly $\Gamma$-converges to $F$ and $\hat{x}^h\in X$ be a minimizer of $F_h$. Then either
\begin{itemize}
\item $F_h(\hat{x}^h)\rightarrow \infty$ and $F\equiv\infty$, or
\item $F_h(\hat{x}^h)\rightarrow \min_{x\in X} F(x)<\infty$. 
\end{itemize}
In the latter case each weak accumulation point of $(\hat{x}_h)_h$ is a minimizer of $F$ and there exists at least one weak accumulation point. If $\hat{x}$ is the unique minimizer of $F$ the full sequence converges $\hat{x}^h\rightharpoonup \hat{x}$.
\end{lemma}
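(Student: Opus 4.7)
My plan is to follow the classical fundamental theorem of $\Gamma$-convergence, branching on whether $m \coloneqq \liminf_{h\to 0} F_h(\hat x^h)$ is infinite or finite. If $m=\infty$, I would take, for an arbitrary $x\in X$, a recovery sequence $x^h\rightharpoonup x$ supplied by weak $\Gamma$-convergence satisfying $F(x)\geq \limsup_{h\to 0} F_h(x^h)$. Since $\hat x^h$ minimizes $F_h$, $F_h(\hat x^h)\leq F_h(x^h)$, so $\infty = m\leq \limsup_h F_h(x^h)\leq F(x)$. As $x$ is arbitrary, this forces $F\equiv\infty$ and $F_h(\hat x^h)\to\infty$, yielding the first alternative.

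In the remaining case $m<\infty$, I first extract a subsequence $(h_k)_k$ along which $F_{h_k}(\hat x^{h_k})\to m$. Equicoercivity provides a coercive minorant $\overline F\leq F_h$, so $\overline F(\hat x^{h_k})\leq F_{h_k}(\hat x^{h_k})$ is bounded; coercivity of $\overline F$ then bounds $(\hat x^{h_k})_k$ in $X$. Reflexivity of $X$ yields a further weakly convergent sub-subsequence $\hat x^{h_{k_j}}\rightharpoonup \hat x$. The liminf half of $\Gamma$-convergence gives $F(\hat x)\leq \liminf_j F_{h_{k_j}}(\hat x^{h_{k_j}}) = m$. For arbitrary $y\in X$, a recovery sequence $y^h\rightharpoonup y$ combined with $F_h(\hat x^h)\leq F_h(y^h)$ yields $m\leq \limsup_h F_h(y^h)\leq F(y)$. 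Thus $F(\hat x)\leq m\leq F(y)$ for every $y\in X$, so $\hat x$ minimizes $F$ and $m=\min F<\infty$.

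To upgrade these subsequential conclusions to the full sequence, I would apply the previous argument to an arbitrary subsequence of $(\hat x^h)$, producing a further subsequence along which $F_h(\hat x^h)\to \min F$; this gives $\liminf_h F_h(\hat x^h)\geq \min F$. Combined with $\limsup_h F_h(\hat x^h)\leq F(\hat x^*) = \min F$ obtained from a recovery sequence of any minimizer $\hat x^*$, this yields full convergence $F_h(\hat x^h)\to\min F$. The same subsequence argument identifies every weak accumulation point of $(\hat x^h)$ as a minimizer of $F$, and when this minimizer $\hat x$ is unique, boundedness of $(\hat x^h)$ together with the fact that every weakly convergent subsequence must have limit $\hat x$ forces weak convergence of the full sequence via the standard subsequence-of-subsequence trick. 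The only real obstacle I anticipate is the bookkeeping required for this upgrade from subsequential to full-sequence statements; no deep estimates are involved beyond the liminf/limsup inequalities and reflexivity of $X$.
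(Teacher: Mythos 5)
Your argument is correct and is essentially the paper's proof: the paper simply cites the fundamental theorem of $\Gamma$-convergence (Braides, Theorem 1.21) and notes the statement is a "direct modification" of it, and what you write out is exactly that standard argument adapted to weak convergence via equicoercivity, reflexivity, and the subsequence-of-subsequences trick. No gaps.
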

\begin{proof}
The proof is a direct modification of \cite[Theorem 1.21]{braides2002gamma}.
\end{proof}

\begin{lemma}\label{lem:func_conv}
Let \Cref{ass:training} hold and assume that for all $j$, $u^h_j\rightarrow u_j$ in $L^p(\Omega_0)$. Then $F_h\coloneqq \Ec^h(y^h,u^h,\;.\;,\;.)$ is equicoercive and weakly $\Gamma$-converges to $F\coloneqq \Ec(y,u,\;.\;,\;.)$.
\end{lemma}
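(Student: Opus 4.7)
The plan is to verify equicoercivity directly and then establish weak $\Gamma$-convergence by separately proving the liminf inequality and constructing a recovery sequence. Equicoercivity is essentially free: since $\ell\geq 0$ and $\Ic_{\Wc_h(\Sigma)}\geq 0$, we have
\[
F_h(w,b)\geq \lambda\|w\|_{p'}+\nu|b|
\]
for every $h$, and the right-hand side is independent of $h$ and coercive on the reflexive product space $\Wc^{p'}(\Sigma)\times\R^L$, which is exactly what is required.

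For the liminf inequality, first I would take a sequence $(w^h,b^h)\rightharpoonup(w,b)$, discard the trivial case $\liminf_h F_h(w^h,b^h)=\infty$, and pass to a subsequence attaining the liminf with finite values. Finiteness forces $w^h\in\Wc_h(\Sigma)$, so the indicator vanishes along that subsequence. Since $\R^L$ is finite dimensional, $b^h\to b$ strongly and $|b^h|\to|b|$, while weak lower semicontinuity of the $L^{p'}$ norm yields $\liminf_h\|w^h\|_{p'}\geq\|w\|_{p'}$. Applying \Cref{rem:CNN_cont} (the weak-convergence version of \Cref{thm:CNN_cont}) to $u_j^h\to u_j$ in $L^p(\Omega_0)$, $w^h\rightharpoonup w$, and $b^h\to b$ gives uniform convergence $\NN_h(u_j^h,w^h,b^h)\to\NN(u_j,w,b)$, hence convergence in $L^\infty(\Omega)$. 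Continuity of $A\in\Lc(L^\infty(\Omega),Y)$ combined with \Cref{ass:training}(\ref{data_term_convergence}) then yields $\ell(y_j^h,A\NN_h(u_j^h,w^h,b^h))\to \ell(y_j,A\NN(u_j,w,b))$ for every $j$. Summing over $j$ and combining with the lower semicontinuity of the regularizers delivers $\liminf_h F_h(w^h,b^h)\geq F(w,b)$.

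For the recovery sequence, given $(w,b)\in\Wc^{p'}(\Sigma)\times\R^L$ I would set $w^h:=P^{h}w$ layerwise (at the resolution appropriate to each layer) and $b^h:=b$. Density of the piecewise constant spaces in $L^{p'}$ gives strong, and therefore weak, convergence $w^h\to w$ and $\|w^h\|_{p'}\to\|w\|_{p'}$; the indicator vanishes by construction. \Cref{thm:CNN_cont} then provides uniform convergence of the network outputs, and the data term converges to its continuum value by exactly the same argument as in the liminf step. Consequently $F_h(w^h,b^h)\to F(w,b)$, which bounds the limsup. The only non-routine step is pushing convergence of the weights through all layers to obtain convergence of the data term; this is delegated entirely to \Cref{thm:CNN_cont}/\Cref{rem:CNN_cont} together with \Cref{ass:training}(\ref{data_term_convergence}), and the remaining work is standard bookkeeping of weak/strong lower semicontinuity of norms.
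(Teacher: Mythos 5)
Your proposal is correct and follows essentially the same route as the paper: equicoercivity from the nonnegativity of the loss and indicator, the liminf inequality via \Cref{rem:CNN_cont} together with boundedness of $A$, \Cref{ass:training}(\ref{data_term_convergence}) and weak lower semicontinuity of the norm, and a recovery sequence of piecewise constant weights obtained by density. The only cosmetic caveat is that the pointwise projection $P^h$ from \eqref{eq:projection_pc} is not defined on general $L^{p'}$ functions, so the recovery weights should be taken directly from density of $\Wc_h(\Sigma)$ in $L^{p'}$ (as the paper does) rather than as $P^h w$.
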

\begin{proof}
The equicoercivity follows from $ \lambda\|w\|_{p'} + \nu |b| \leq F_h(w,b)$.
To show the $\liminf$ inequality for weak $\Gamma$-convergence, take $w^h\rightharpoonup w$ and $b^h\rightarrow b$. Then, due to the strong convergence of $u^h_j$, by \Cref{thm:CNN_cont} and \Cref{rem:CNN_cont} we find that $\NN_h(u^h_j,w^h,b^h)\rightarrow \NN (u_j,w,b)$ uniformly. Since $A$ is bounded, also $A\NN_h(u^h_j,w^h,b^h)\rightarrow A\NN (u_j,w,b)$. Using \Cref{ass:training}, \eqref{data_term_convergence}, the weak lower semicontinuity of $\|\; . \;\|_{p'}$ and the fact that we always have $0\leq \liminf_{h\rightarrow 0}\Ic_{\Wc_h(\Sigma)}(w^h)$, we conclude
\begin{multline*}
\sum_j \ell(y_j,A\NN(u_j,w,b)) + \lambda\|w\|_{p'} +\nu |b| \leq\\
\liminf_{h\rightarrow 0} \left[ \sum_j\ell(y^h_j,A\NN_h(u^h_j,w^h,b^h)) + \lambda\|w^h\|_{p'} +\nu |b^h| + \Ic_{\Wc_h(\Sigma)}(w^h)\right].
\end{multline*}
To prove the $\limsup$ inequality, let $w^h \in \Wc_h(\Sigma)$ such that $w^h\rightarrow w$ in $L^{p'}$, which exists by density, and $b^h=b\in\R^L$ for all $h$. As above we find that $A\NN_h(u^h_j,w^h,b^h)\rightarrow A\NN (u_j,w,b)$. Therefore, using also the continuity of $\|\; . \;\|_{p'}$ wrt. to strong convergence, we can compute
\begin{multline*}
\limsup_{h\rightarrow 0} \left[\sum_j \ell(y^h_j,A\NN_h(u^h_j,w^h,b^h)) + \lambda\|w^h\|_{p'}+\nu |b^h| + \underbrace{\Ic_{\Wc_h(\Sigma)}(w^h)}_{=0}\right]  \\
= \lim_{h\rightarrow 0} \left[\sum_j \ell(y^h_j,A\NN_h(u^h_j,w^h,b^h)) + \lambda\|w^h\|_{p'} +\nu|b^h| \right]  \\
= \sum_j\ell(y_j,A\NN(u_j,w,b)) + \lambda\|w\|_{p'}+\nu|b|
\end{multline*}
\end{proof}

\begin{theorem}\label{thm:convergence}
Let \Cref{ass:training} hold, and let $u^h\rightarrow u$ in $L^p(\Omega_0)$. Then for each $h$, $\Ec^h(y^h,u^h,\;.\;,\;.)$ admits a minimizer $(\hat{w}^h,\hat{b}^h)$ and either
\begin{itemize}
\item $\lim_{h\rightarrow 0} \Ec^h(y^h,u^h, \hat{w}^h,\hat{b}^h) = \infty$ and $\Ec(y,u,\;.\;,\;.)$ is not proper, or
\item $\lim_{h\rightarrow 0} \Ec^h(y^h,u^h, \hat{w}^h,\hat{b}^h) = \min\limits_{\substack{w\in \Wc^{p'}(\Sigma)\\b\in\R^L}} \Ec(y,u, w,b)$.
\end{itemize}
In the latter case each weak accumulation point of $(\hat{w}^h,\hat{b}^h)_h$ solves $\min_{w,b} \Ec(y,u, w,b)$ and there exists at least one weak accumulation point.
\begin{proof}
Note that by \Cref{ass:training}, $\min_{w^h,b^h}\Ec^h(y^h,u^h,w^h,b^h)$ is a minimization problem of a proper, coercive, and weakly lower semicontinuous functional over a reflexive Banach implying existence of a minimizer. We can conclude using $\Gamma$-convergence of $\Ec^h(y^h,u^h,\;.\;,\;.)$ and \Cref{lem:gamma_convergence}.
\end{proof}
\end{theorem}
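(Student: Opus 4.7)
The plan is to view Theorem \ref{thm:convergence} as a direct application of Lemma \ref{lem:gamma_convergence} combined with the $\Gamma$-convergence result \Cref{lem:func_conv}, with the only remaining work being the existence of minimizers at each finite discretization level $h$. Since the three bullet-point conclusions of the theorem (convergence of energies, weak accumulation points being minimizers, and existence of at least one such accumulation point) are exactly the content of \Cref{lem:gamma_convergence}, the essential novelty of the proof lies in verifying the hypotheses of that lemma, namely that each $\Ec^h(y^h,u^h,\,\cdot\,,\,\cdot\,)$ is proper and admits a minimizer, and that the equicoercivity and weak $\Gamma$-convergence are in place.

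First I would address the existence of $(\hat w^h,\hat b^h)$ at fixed $h$ by the direct method on the reflexive Banach space $\Wc^{p'}(\Sigma)\times\R^L$ (reflexive since $p'\in(1,\infty)$ and $\R^L$ is finite-dimensional). The functional $\Ec^h$ is proper by \Cref{ass:training}\eqref{ass:proper}, and it is coercive since the regularizer satisfies $\lambda\|w^h\|_{p'}+\nu|b^h|\leq \Ec^h(y^h,u^h,w^h,b^h)$ while the remaining terms are nonnegative. For weak lower semicontinuity, the norm terms are weakly lower semicontinuous by standard arguments, and the indicator $\Ic_{\Wc_h(\Sigma)}$ is weakly lower semicontinuous because $\Wc_h(\Sigma)$ is a closed finite-dimensional (hence weakly closed) subspace of $\Wc^{p'}(\Sigma)$. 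On this subspace, weak and strong convergence coincide, so along any weakly convergent minimizing sequence the network output $\NN_h(u^h_j,w^h,b^h)$ converges uniformly by the same arguments used to prove \Cref{thm:CNN_cont}; then $A$ being bounded and $\ell(y^h_j,\,\cdot\,)$ being lower semicontinuous (again \Cref{ass:training}\eqref{ass:proper}) yield the lower semicontinuity of the data term. A minimizing sequence can thus be extracted weakly (by coercivity and reflexivity) to yield a minimizer.

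Next I would invoke \Cref{lem:func_conv}, which directly supplies the equicoercivity of $(F_h)_h := (\Ec^h(y^h,u^h,\,\cdot\,,\,\cdot\,))_h$ and its weak $\Gamma$-convergence to $F := \Ec(y,u,\,\cdot\,,\,\cdot\,)$. With both hypotheses of \Cref{lem:gamma_convergence} satisfied on the reflexive space $\Wc^{p'}(\Sigma)\times\R^L$, the dichotomy of that lemma transfers verbatim to the present setting: either $F_h(\hat w^h,\hat b^h)\to\infty$, in which case $F\equiv\infty$, i.e.\ $\Ec(y,u,\,\cdot\,,\,\cdot\,)$ is not proper; or $F_h(\hat w^h,\hat b^h)\to\min F<\infty$, in which case each weak accumulation point of $(\hat w^h,\hat b^h)_h$ is a minimizer of $F$ and at least one such accumulation point exists (extracted from the minimizing sequence via coercivity and reflexivity).

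The only step that might cause friction is the weak lower semicontinuity of the data term inside the direct method at level $h$, because a priori $\NN_h$ is defined on infinite-dimensional inputs; the trick is that the indicator collapses the feasible set to a finite-dimensional slice on which weak convergence of $w^h$ is the same as convergence in any norm, so the continuity arguments from \Cref{thm:CNN_cont} apply without complication. Everything else is bookkeeping of the pieces already assembled in \Cref{ass:training}, \Cref{lem:func_conv}, and \Cref{lem:gamma_convergence}.
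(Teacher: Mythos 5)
Your proposal is correct and follows essentially the same route as the paper: existence of discrete minimizers via the direct method (properness from \Cref{ass:training}, coercivity from the regularizer, weak lower semicontinuity of the data term via the weak-to-uniform continuity of the network output), followed by \Cref{lem:func_conv} and \Cref{lem:gamma_convergence}. The paper merely asserts these ingredients more tersely; your elaboration of the weak lower semicontinuity on the finite-dimensional slice $\Wc_h(\Sigma)$ is a valid filling-in of a step the paper leaves implicit (and which could alternatively be justified directly by \Cref{rem:CNN_cont}).
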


\Cref{thm:convergence} justifies making claims about the discrete setting based on results in the infinite dimensional setting. It states that, if the objective functional $\Ec(y,u,\;.\;,\;.)$ is proper, the solutions of the discrete training problems indeed approximate the solution of the continuous training problem and, therefore, any regularity guaranteed for the output $\NN(u,w,b)$ in the infinite dimensional setting will be reflected in the discrete approximation $\NN_h(u^h,w^h,b^h)$ given the resolution $h>0$ is sufficiently small. The condition of $\Ec(y,u,\;.\;,\;.)$ being proper is not restrictive, as properness solely requires that there is at least one point, where the functional is not equal to infinity. Considering the definition of the objective functional at hand in \eqref{eq:cont_training}, issues may arise at most from the loss $\ell$ as the other terms are finite everywhere. While both losses presented in \Cref{ex:data_fid} are, indeed, proper a more careful treatment would be necessary if, e.g., an indicator function is used as a loss as sometimes seen in inpainting experiments \cite[Section 3.1]{habring2022generative}. In this case, properness of the objective requires the used CNN to be capable of fitting the data exactly in the known data points. As a remedy -- and also to avoid non-differentiability of the indicator function -- an $L^2$ loss on the known data points is frequently used as a surrogate loss function \cite[Section 3, Inpainting]{deep_image_prior} for which properness is not an issue.

\section{Numerical Experiments}\label{sec:numerical_results}
For details regarding the implementation we refer to \Cref{sec:appendix_implementation} and to the publicly available source code \cite{cnn_regularity_git}.
\paragraph{End-to-end imaging} For end to end imaging applications, where the network input is either a corrupted image for restoration \cite{mao2016image,mohan2019robust}, or a clean image for segmentation \cite{ronneberger2015u}, our results predict a continuous or continuously differentiable output, respectively, as long as the convolution kernels admit $L^2$ regularity.
Experimental results supporting this claim can be found in \Cref{fig:unet_experiments_wd,fig:unet_experiments_resolution}. We perform end-to-end denoising from additive Gaussian noise with zero mean and a standard deviation of 0.07 with a U-net. In \Cref{fig:unet_experiments_wd} we show reconstructions for the same network trained with different regularization parameters. Precisely, we trained the network with increasing values of the weight decay of Pytorch's ADAM optimizer, which corresponds to the parameter of the $\ell_2$ regularization of the network weights. One can observe that, as the weight decay increases, the reconstruction becomes progressively smoother and tends to decrease the magnitude of jumps and in some cases eliminate them at all, which also decreases contrast in the image. This can be observed best, when inspecting the 1D plots of the cuts through the images. There, kinks are significantly smoothed and jumps are filled as the weight decay increases. In \Cref{fig:unet_experiments_resolution} we compare results for different resolutions. We used the network from \Cref{fig:unet_experiments_wd} trained with a weight decay of $0.001$ and first doubled, then tripled the resolution by doubling/tripling the sizes of all filter kernels. For training and testing we used the same images sampled at the corresponding resolutions. The results indicate, indeed, convergence to a continuous output in the infinite resolution limit. In particular, one can observe again that jumps in the output images are filled as the resolution increases.

\renewcommand\w{1.9cm}
\newcommand\hw{0.55cm}
\newcommand\h{1cm}

\newcommand\im{0}
\newcommand\scaling{1}
\newcommand\nepochs{40}

\begin{figure}
\centering
\begin{subfigure}{\textwidth}
\centering
\begin{subfigure}{\w}
\caption*{Noisy}
\includegraphics[width=\w]{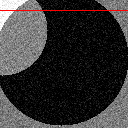}%
\end{subfigure}%
\begin{subfigure}{\w}
\caption*{GT}
\includegraphics[width=\w]{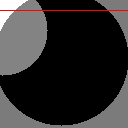}%
\end{subfigure}%
\begin{subfigure}{\w}
\caption*{WD 0}
\includegraphics[width=\w]{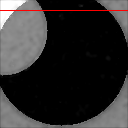}%
\end{subfigure}%
\begin{subfigure}{\w}
\caption*{WD 0.001}
\includegraphics[width=\w]{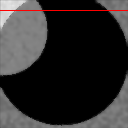}%
\end{subfigure}%
\begin{subfigure}{\w}
\caption*{WD 0.01}
\includegraphics[width=\w]{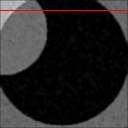}%
\end{subfigure}%
\begin{subfigure}{\w}
\caption*{WD 0.1}
\includegraphics[width=\w]{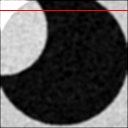}%
\end{subfigure}%
\begin{subfigure}{\w}
\caption*{WD 0.5}
\includegraphics[width=\w]{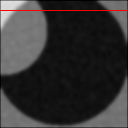}%
\end{subfigure}%
\end{subfigure}
\begin{subfigure}{\textwidth}
\centering
\includegraphics[width=\w]{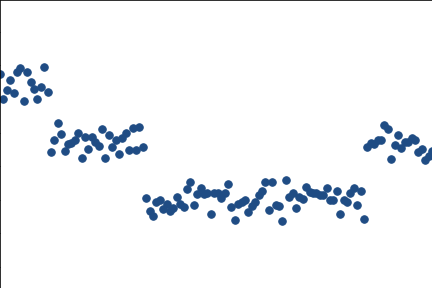}%
\includegraphics[width=\w]{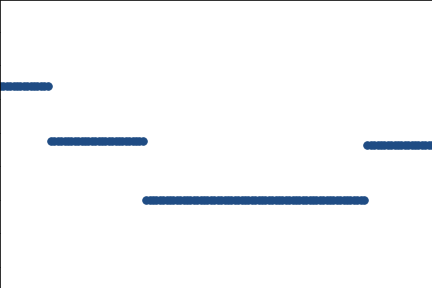}%
\includegraphics[width=\w]{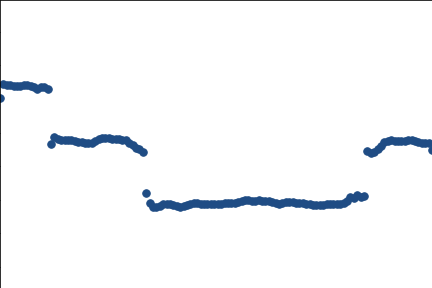}%
\includegraphics[width=\w]{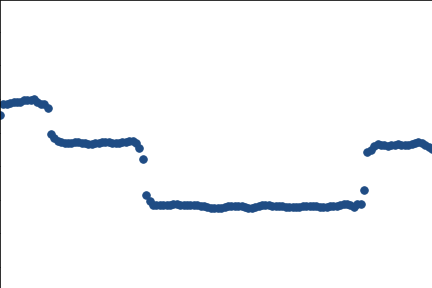}%
\includegraphics[width=\w]{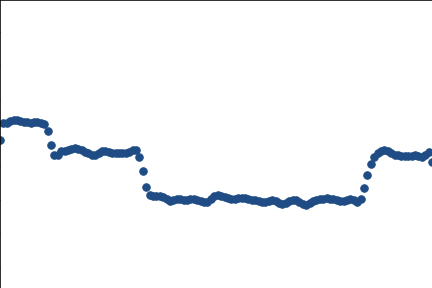}%
\includegraphics[width=\w]{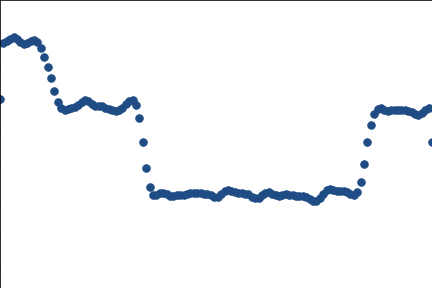}%
\includegraphics[width=\w]{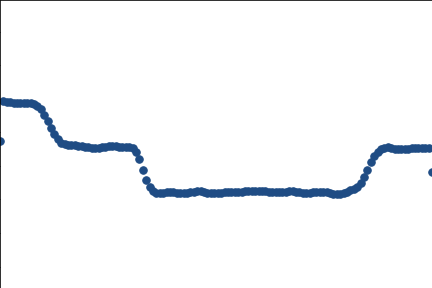}%
\end{subfigure}

\renewcommand\im{1}
\begin{subfigure}{\textwidth}
\centering
\includegraphics[width=\w]{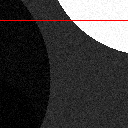}%
\includegraphics[width=\w]{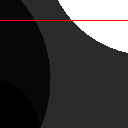}%
\includegraphics[width=\w]{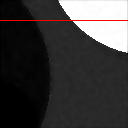}%
\includegraphics[width=\w]{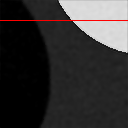}%
\includegraphics[width=\w]{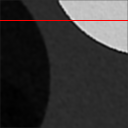}%
\includegraphics[width=\w]{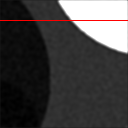}%
\includegraphics[width=\w]{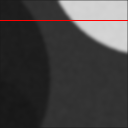}%
\end{subfigure}
\begin{subfigure}{\textwidth}
\centering
\includegraphics[width=\w]{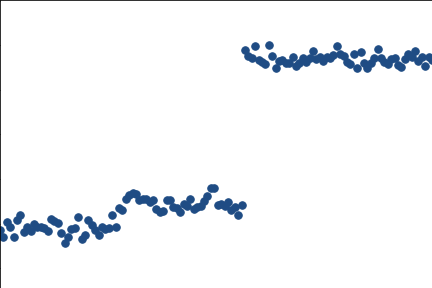}%
\includegraphics[width=\w]{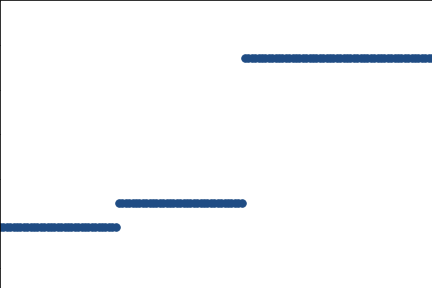}%
\includegraphics[width=\w]{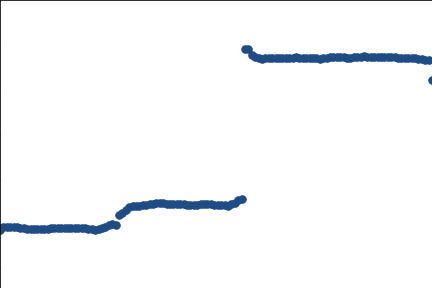}%
\includegraphics[width=\w]{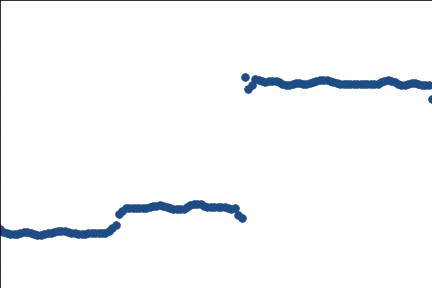}%
\includegraphics[width=\w]{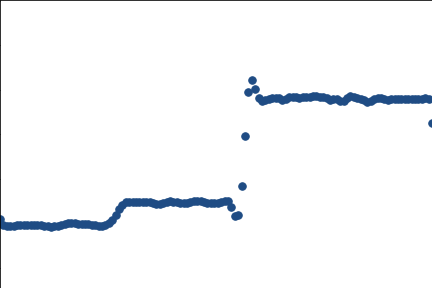}%
\includegraphics[width=\w]{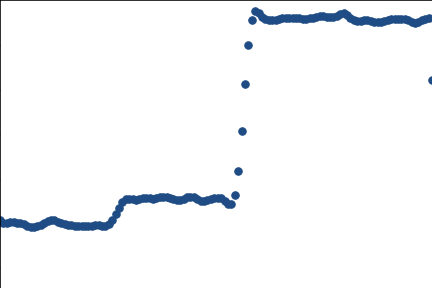}%
\includegraphics[width=\w]{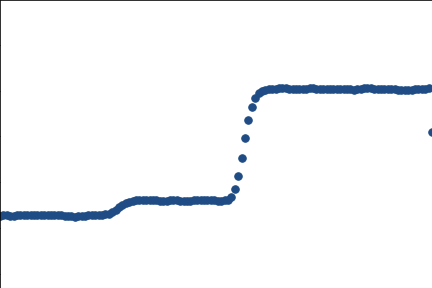}%
\end{subfigure}

\renewcommand\im{2}
\begin{subfigure}{\textwidth}
\centering
\includegraphics[width=\w]{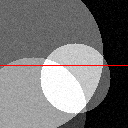}%
\includegraphics[width=\w]{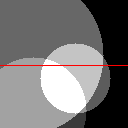}%
\includegraphics[width=\w]{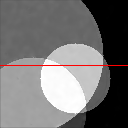}%
\includegraphics[width=\w]{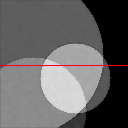}%
\includegraphics[width=\w]{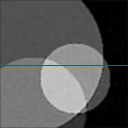}%
\includegraphics[width=\w]{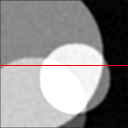}%
\includegraphics[width=\w]{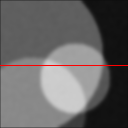}%
\end{subfigure}
\begin{subfigure}{\textwidth}
\centering
\includegraphics[width=\w]{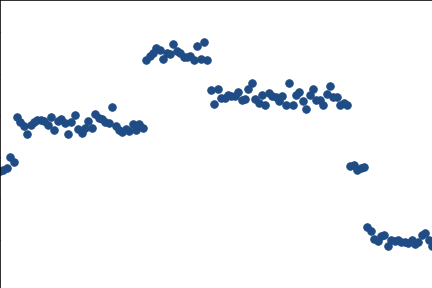}%
\includegraphics[width=\w]{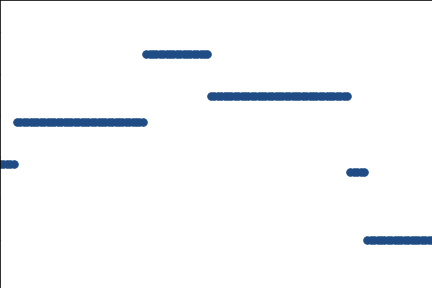}%
\includegraphics[width=\w]{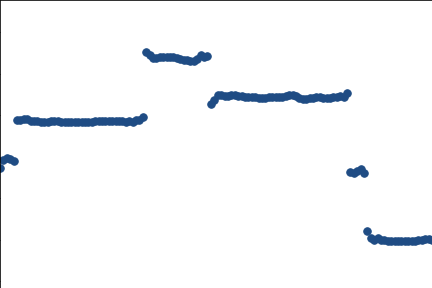}%
\includegraphics[width=\w]{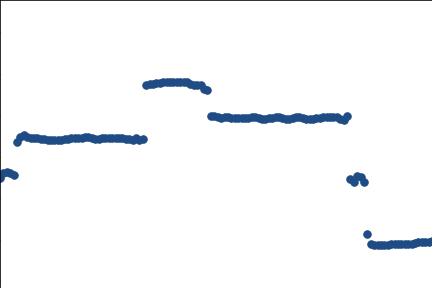}%
\includegraphics[width=\w]{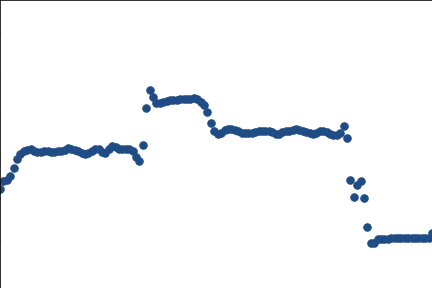}%
\includegraphics[width=\w]{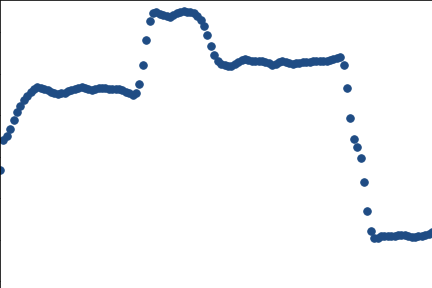}%
\includegraphics[width=\w]{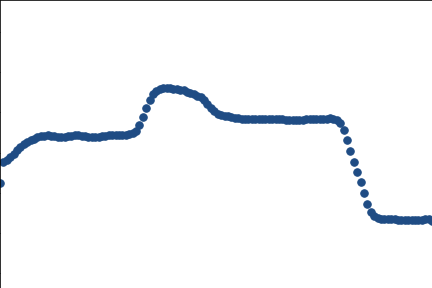}%
\end{subfigure}

\renewcommand\im{3}
\begin{subfigure}{\textwidth}
\centering
\includegraphics[width=\w]{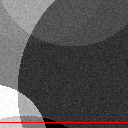}%
\includegraphics[width=\w]{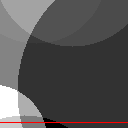}%
\includegraphics[width=\w]{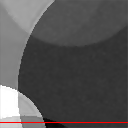}%
\includegraphics[width=\w]{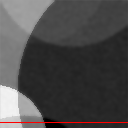}%
\includegraphics[width=\w]{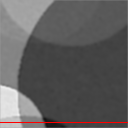}%
\includegraphics[width=\w]{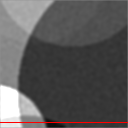}%
\includegraphics[width=\w]{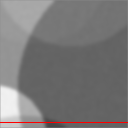}%
\end{subfigure}
\begin{subfigure}{\textwidth}
\centering
\includegraphics[width=\w]{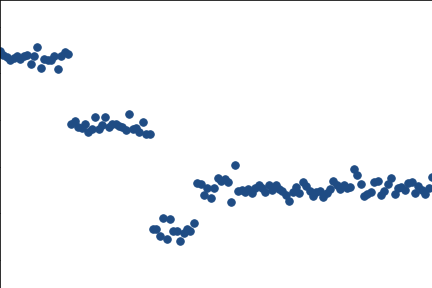}%
\includegraphics[width=\w]{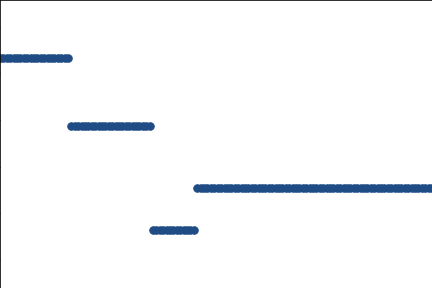}%
\includegraphics[width=\w]{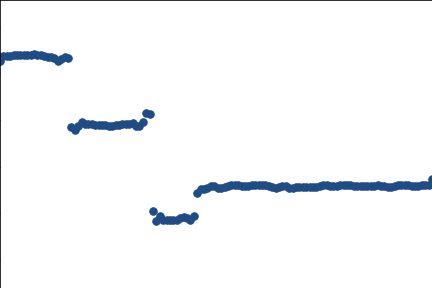}%
\includegraphics[width=\w]{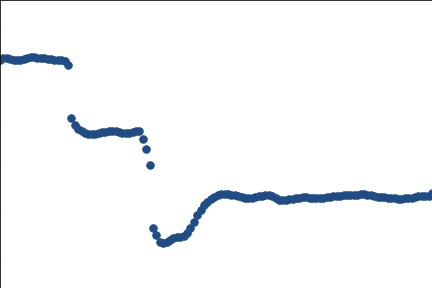}%
\includegraphics[width=\w]{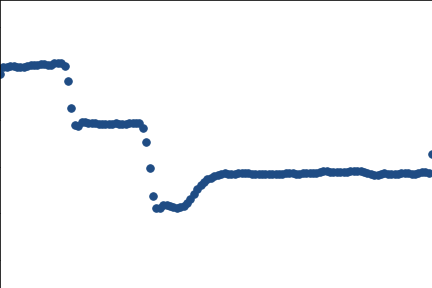}%
\includegraphics[width=\w]{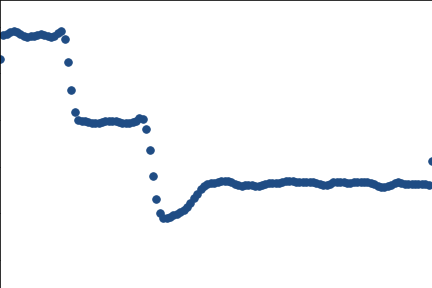}%
\includegraphics[width=\w]{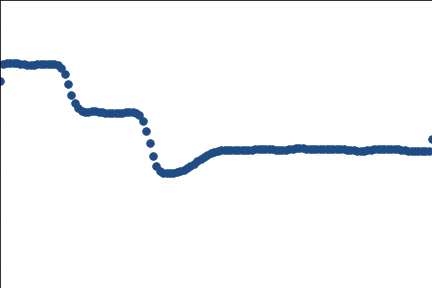}%
\end{subfigure}

\renewcommand\im{4}
\begin{subfigure}{\textwidth}
\centering
\includegraphics[width=\w]{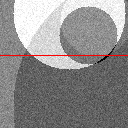}%
\includegraphics[width=\w]{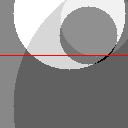}%
\includegraphics[width=\w]{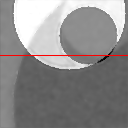}%
\includegraphics[width=\w]{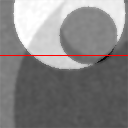}%
\includegraphics[width=\w]{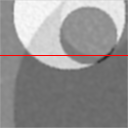}%
\includegraphics[width=\w]{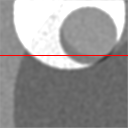}%
\includegraphics[width=\w]{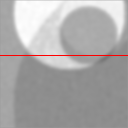}%
\end{subfigure}
\begin{subfigure}{\textwidth}
\centering
\includegraphics[width=\w]{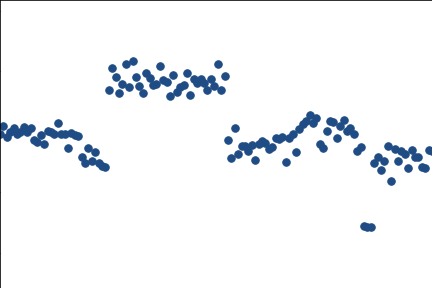}%
\includegraphics[width=\w]{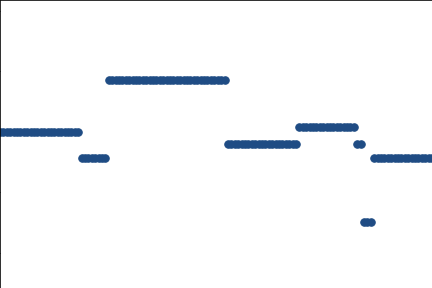}%
\includegraphics[width=\w]{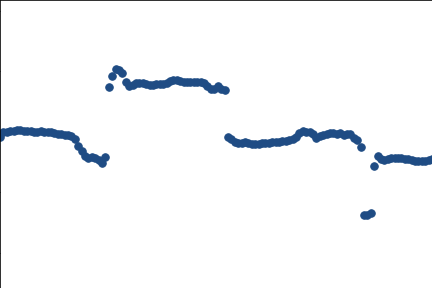}%
\includegraphics[width=\w]{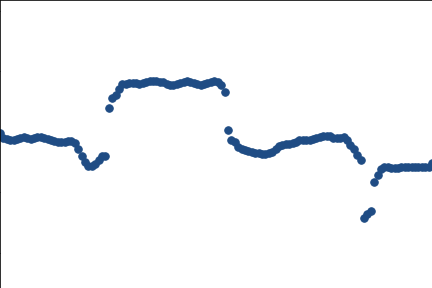}%
\includegraphics[width=\w]{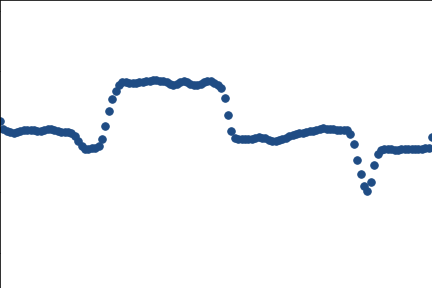}%
\includegraphics[width=\w]{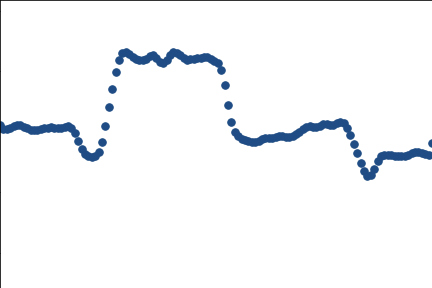}%
\includegraphics[width=\w]{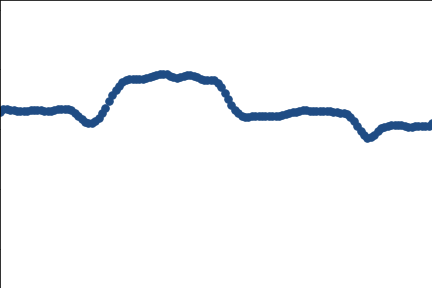}%
\end{subfigure}

\caption{Denoising end-to-end with a U-net. From left to right: Noisy data, ground truth data, reconstruction for a network trained with weight decay WD 0, 0.001, 0.01, 0.1, 0.5. Odd rows show images, even rows show a 1D plot of the red cut through the images above.}
\label{fig:unet_experiments_wd}
\end{figure}

\renewcommand\w{2.25cm}
\renewcommand\hw{0.55cm}
\renewcommand\h{1cm}

\renewcommand\im{0}
\renewcommand\scaling{1}
\newcommand\wed{0,01}

\newcommand\wdi{0,001}
\newcommand\wdii{0,00025}
\newcommand\wdiii{0,00011111111111111112}

\begin{figure}
\centering
\begin{subfigure}{\textwidth}
\centering
\begin{subfigure}[B]{\w}
\caption*{Noisy}
\includegraphics[width=\w]{codes_data_results_unet_image_0_noisy.png}%
\end{subfigure}%
\begin{subfigure}[B]{\w}
\caption*{GT}
\includegraphics[width=\w]{codes_data_results_unet_image_0_ground.png}%
\end{subfigure}%
\begin{subfigure}[B]{\w}
\caption*{Baseline\\resolution}
\includegraphics[width=\w]{codes_data_results_unet_size_orig_128_scaling_1_weight_decay_0,001_n_epochs_40_image_0.png}%
\end{subfigure}%
\begin{subfigure}[B]{\w}
\caption*{Double\\resolution}
\includegraphics[width=\w]{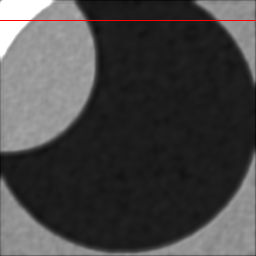}%
\end{subfigure}%
\begin{subfigure}[B]{\w}
\caption*{Triple\\resolution}
\includegraphics[width=\w]{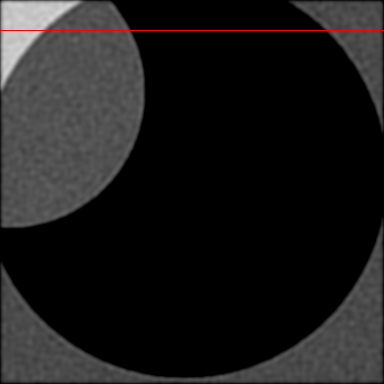}%
\end{subfigure}%
\end{subfigure}
\begin{subfigure}{\textwidth}
\centering
\includegraphics[width=\w]{codes_data_results_unet_image_0_noisy_cut.png}%
\includegraphics[width=\w]{codes_data_results_unet_image_0_ground_cut.png}%
\includegraphics[width=\w]{codes_data_results_unet_size_orig_128_scaling_1_weight_decay_0,001_n_epochs_40_image_0_cut.png}%
\includegraphics[width=\w]{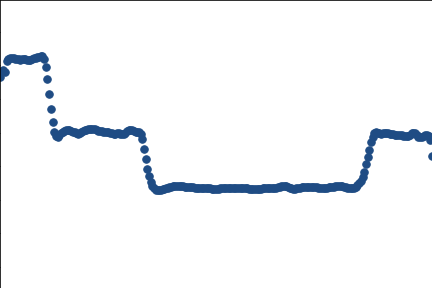}%
\includegraphics[width=\w]{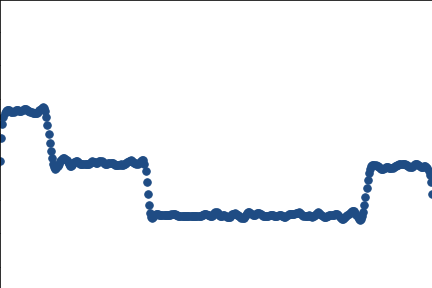}%
\end{subfigure}

\renewcommand\im{1}
\begin{subfigure}{\textwidth}
\centering
\begin{subfigure}{\w}
\includegraphics[width=\w]{codes_data_results_unet_image_1_noisy.png}%
\end{subfigure}%
\begin{subfigure}{\w}
\includegraphics[width=\w]{codes_data_results_unet_image_1_ground.png}%
\end{subfigure}%
\begin{subfigure}{\w}
\includegraphics[width=\w]{codes_data_results_unet_size_orig_128_scaling_1_weight_decay_0,001_n_epochs_40_image_1.png}%
\end{subfigure}%
\begin{subfigure}{\w}
\includegraphics[width=\w]{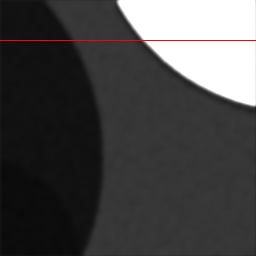}%
\end{subfigure}%
\begin{subfigure}{\w}
\includegraphics[width=\w]{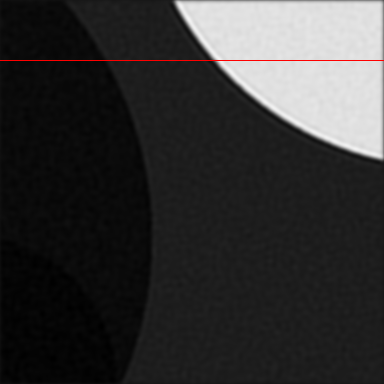}%
\end{subfigure}%
\end{subfigure}
\begin{subfigure}{\textwidth}
\centering
\includegraphics[width=\w]{codes_data_results_unet_image_1_noisy_cut.png}%
\includegraphics[width=\w]{codes_data_results_unet_image_1_ground_cut.png}%
\includegraphics[width=\w]{codes_data_results_unet_size_orig_128_scaling_1_weight_decay_0,001_n_epochs_40_image_1_cut.png}%
\includegraphics[width=\w]{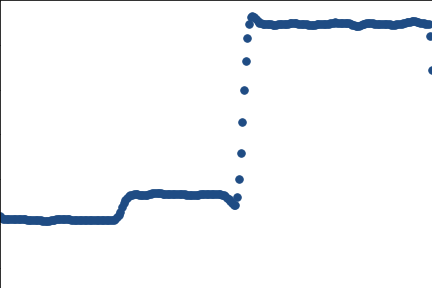}%
\includegraphics[width=\w]{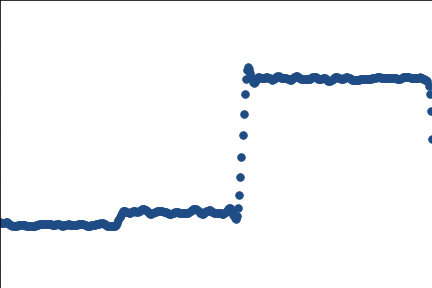}%
\end{subfigure}

\renewcommand\im{2}
\begin{subfigure}{\textwidth}
\centering
\begin{subfigure}{\w}
\includegraphics[width=\w]{codes_data_results_unet_image_2_noisy.png}%
\end{subfigure}%
\begin{subfigure}{\w}
\includegraphics[width=\w]{codes_data_results_unet_image_2_ground.png}%
\end{subfigure}%
\begin{subfigure}{\w}
\includegraphics[width=\w]{codes_data_results_unet_size_orig_128_scaling_1_weight_decay_0,001_n_epochs_40_image_2.png}%
\end{subfigure}%
\begin{subfigure}{\w}
\includegraphics[width=\w]{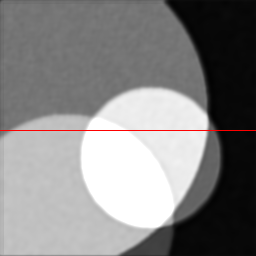}%
\end{subfigure}%
\begin{subfigure}{\w}
\includegraphics[width=\w]{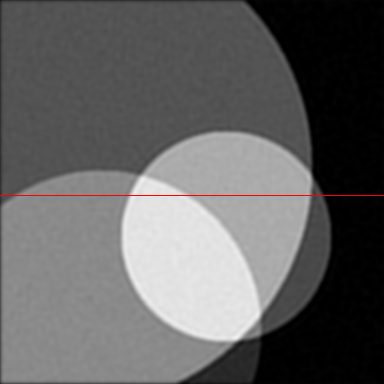}%
\end{subfigure}%
\end{subfigure}
\begin{subfigure}{\textwidth}
\centering
\includegraphics[width=\w]{codes_data_results_unet_image_2_noisy_cut.png}%
\includegraphics[width=\w]{codes_data_results_unet_image_2_ground_cut.png}%
\includegraphics[width=\w]{codes_data_results_unet_size_orig_128_scaling_1_weight_decay_0,001_n_epochs_40_image_2_cut.png}%
\includegraphics[width=\w]{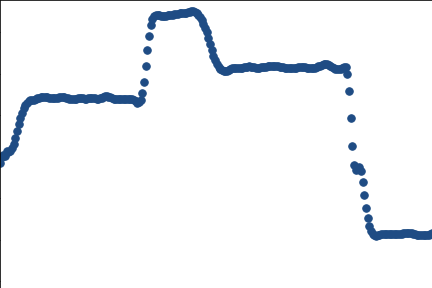}%
\includegraphics[width=\w]{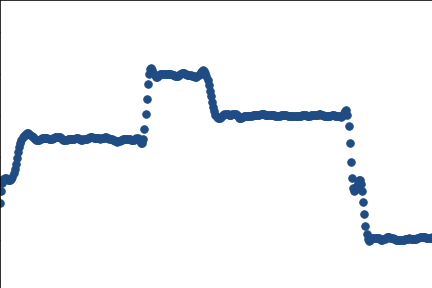}%
\end{subfigure}

\renewcommand\im{3}
\begin{subfigure}{\textwidth}
\centering
\begin{subfigure}{\w}
\includegraphics[width=\w]{codes_data_results_unet_image_3_noisy.png}%
\end{subfigure}%
\begin{subfigure}{\w}
\includegraphics[width=\w]{codes_data_results_unet_image_3_ground.png}%
\end{subfigure}%
\begin{subfigure}{\w}
\includegraphics[width=\w]{codes_data_results_unet_size_orig_128_scaling_1_weight_decay_0,001_n_epochs_40_image_3.png}%
\end{subfigure}%
\begin{subfigure}{\w}
\includegraphics[width=\w]{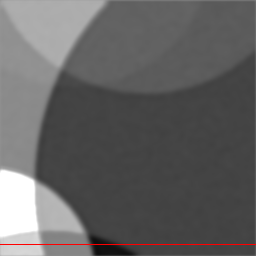}%
\end{subfigure}%
\begin{subfigure}{\w}
\includegraphics[width=\w]{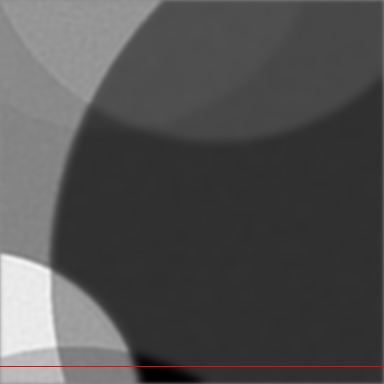}%
\end{subfigure}%
\end{subfigure}
\begin{subfigure}{\textwidth}
\centering
\includegraphics[width=\w]{codes_data_results_unet_image_3_noisy_cut.png}%
\includegraphics[width=\w]{codes_data_results_unet_image_3_ground_cut.png}%
\includegraphics[width=\w]{codes_data_results_unet_size_orig_128_scaling_1_weight_decay_0,001_n_epochs_40_image_3_cut.png}%
\includegraphics[width=\w]{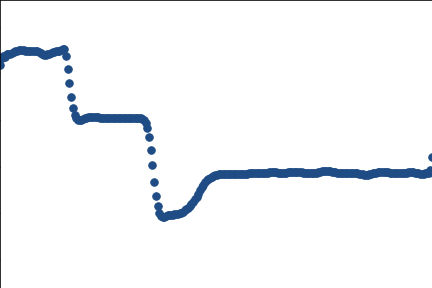}%
\includegraphics[width=\w]{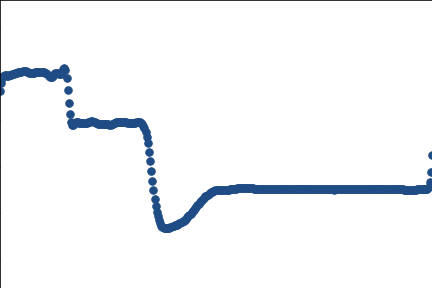}%
\end{subfigure}

\renewcommand\im{4}
\begin{subfigure}{\textwidth}
\centering
\begin{subfigure}{\w}
\includegraphics[width=\w]{codes_data_results_unet_image_4_noisy.png}%
\end{subfigure}%
\begin{subfigure}{\w}
\includegraphics[width=\w]{codes_data_results_unet_image_4_ground.png}%
\end{subfigure}%
\begin{subfigure}{\w}
\includegraphics[width=\w]{codes_data_results_unet_size_orig_128_scaling_1_weight_decay_0,001_n_epochs_40_image_4.png}%
\end{subfigure}%
\begin{subfigure}{\w}
\includegraphics[width=\w]{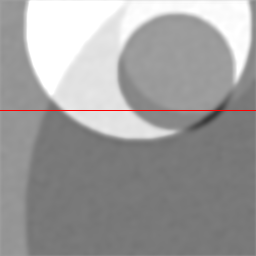}%
\end{subfigure}%
\begin{subfigure}{\w}
\includegraphics[width=\w]{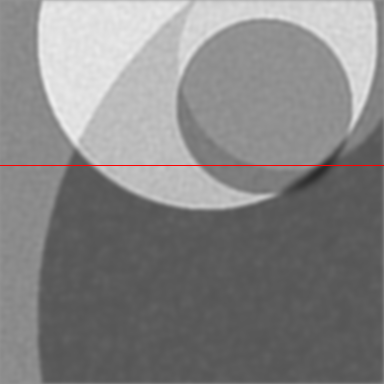}%
\end{subfigure}%
\end{subfigure}
\begin{subfigure}{\textwidth}
\centering
\includegraphics[width=\w]{codes_data_results_unet_image_4_noisy_cut.png}%
\includegraphics[width=\w]{codes_data_results_unet_image_4_ground_cut.png}%
\includegraphics[width=\w]{codes_data_results_unet_size_orig_128_scaling_1_weight_decay_0,001_n_epochs_40_image_4_cut.png}%
\includegraphics[width=\w]{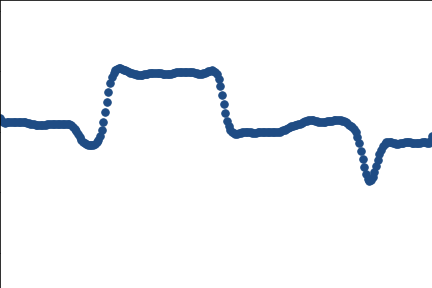}%
\includegraphics[width=\w]{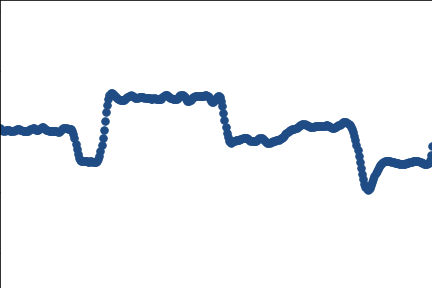}%
\end{subfigure}

\caption{Denoising end-to-end with a U-net. From left to right: Noisy data, ground truth data, reconstruction with the \emph{baseline} resolution and weight decay 0.001, then double resolution, and triple resolution. Odd rows show images, even rows show a 1D plot of the red cut through the images above.}
\label{fig:unet_experiments_resolution}
\end{figure}

\paragraph{Deep image prior} The deep image prior (DIP) \cite{deep_image_prior} is an untrained generative CNN used for solving general inverse problems in imaging. The solution is obtained as the output of a CNN with a fixed input (either uniformly distributed random noise or a smooth meshgrid) and parameters that are fitted to the corrupted data combined with early stopping. 
For this setting our analysis predicts the network output in the infinite resolution limit to be continuous for the random input and $C^1$ for the meshgrid input, as long as the $L^2$ norm of the weights is penalized (or remains bounded) during the optimization. We apply the DIP to denoising. As corruption we use additive Gaussian noise with mean 0, STD 0.1 times the image range and afterwards clip the image to $[0,1]$. Hyperparameters are set as proposed in \cite{deep_image_prior} for denoising the F16 image (for details see \Cref{appendix:dip}). The experimental results shown in \Cref{fig:dip_experiments} confirm our theoretical prediction: In early iterations the DIP output is significantly smoother than a reference result obtained with TV regularization. This is apparent in a blurrier image as well as in the 1D plots of the cut through the image, where again one can observe that the DIP results tend to smoothen jump discontinuities. At later iterations, also discontinuities and noise are reconstructed, while the $L^2$ norm of the network weights progressively increases as shown in \Cref{fig:dip_graphs}. The latter can be interpreted as the weights losing $L^2$ regularity, such that continuity of the network output can no longer be guaranteed. These observations hold for both the random and the meshgrid input, where for the latter, as predicted by our analysis, the regularity of the output is higher. For the experiments we used the publicly available code for the deep image prior \cite{deep_image_prior_git} licensed under Apache License 2.0.

\renewcommand\w{2cm}
\renewcommand\hw{0.6cm}
\renewcommand\hw{2cm}
\renewcommand\h{1cm}

\newcommand\numit{600}
\newcommand\NUMIT{59900}

\begin{figure}
\centering
\begin{subfigure}{\textwidth}
\centering
\begin{subfigure}[B]{\w}
\caption*{Noisy}
\includegraphics[width=\w]{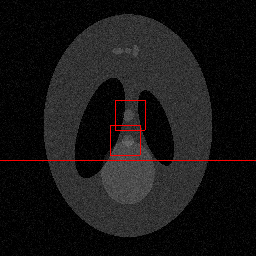}
\includegraphics[width=\hw]{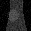}
\includegraphics[width=\hw]{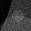}
\includegraphics[width=\w]{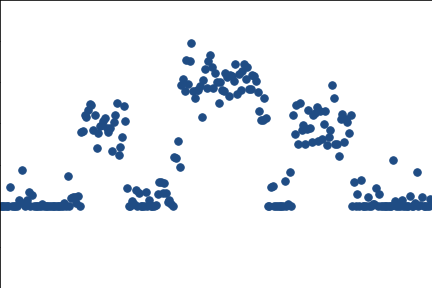}%
\end{subfigure}%
\begin{subfigure}[B]{\w}
\caption*{GT}
\includegraphics[width=\w]{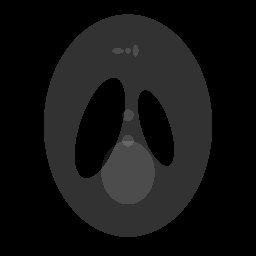}
\includegraphics[width=\hw]{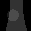}
\includegraphics[width=\hw]{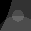}
\includegraphics[width=\w]{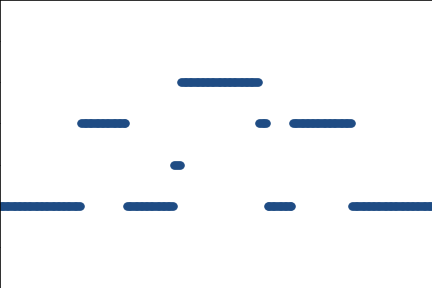}%
\end{subfigure}%
\begin{subfigure}[B]{\w}
\caption*{TV}
\includegraphics[width=\w]{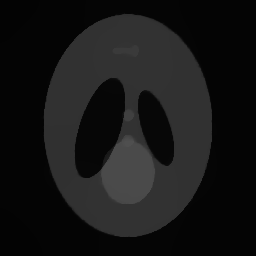}
\includegraphics[width=\hw]{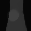}
\includegraphics[width=\hw]{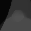}
\includegraphics[width=\w]{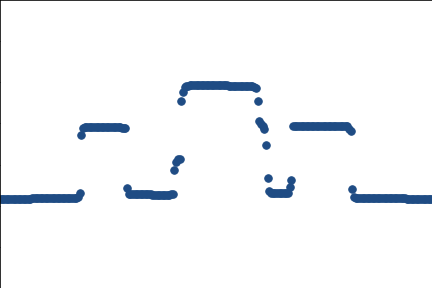}%
\end{subfigure}%
\begin{subfigure}[B]{\w}
\caption*{300}
\includegraphics[width=\w]{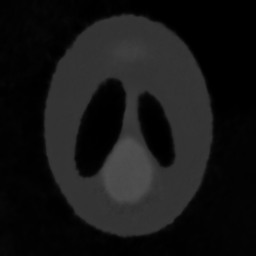}
\includegraphics[width=\hw]{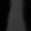}
\includegraphics[width=\hw]{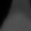}
\includegraphics[width=\w]{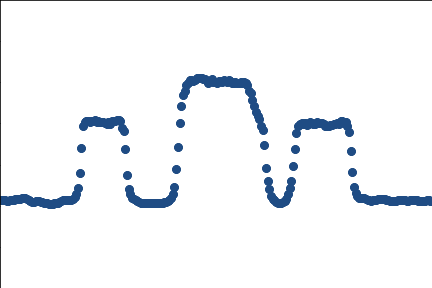}%
\end{subfigure}%
\begin{subfigure}[B]{\w}
\caption*{1700}
\includegraphics[width=\w]{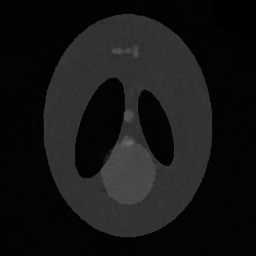}
\includegraphics[width=\hw]{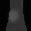}
\includegraphics[width=\hw]{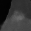}
\includegraphics[width=\w]{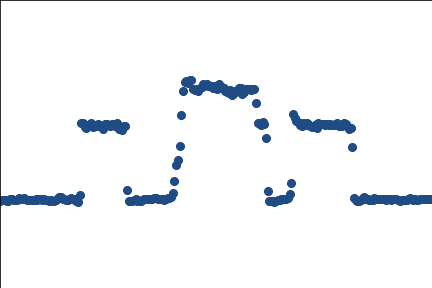}%
\end{subfigure}%
\begin{subfigure}[B]{\w}
\caption*{4900}
\includegraphics[width=\w]{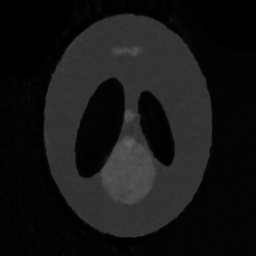}
\includegraphics[width=\hw]{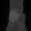}
\includegraphics[width=\hw]{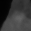}
\includegraphics[width=\w]{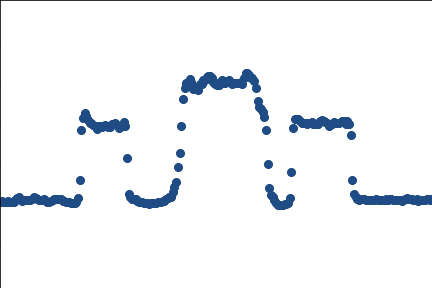}%
\end{subfigure}%
\begin{subfigure}[B]{\w}
\caption*{20000\\iterations}
\includegraphics[width=\w]{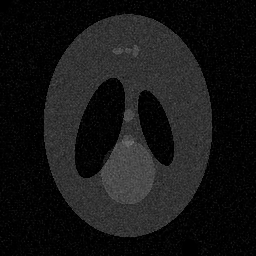}
\includegraphics[width=\hw]{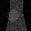}
\includegraphics[width=\hw]{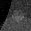}
\includegraphics[width=\w]{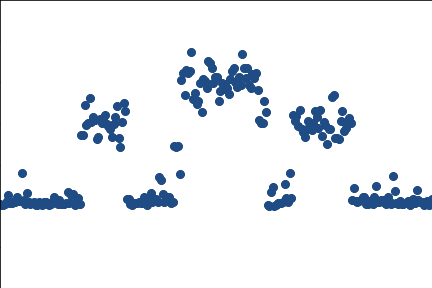}%
\end{subfigure}%
\caption{Noise input}
\end{subfigure}%

\centering
\begin{subfigure}{\textwidth}
\centering
\begin{subfigure}{\w}
\includegraphics[width=\w]{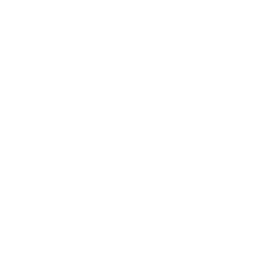}
\end{subfigure}%
\begin{subfigure}{\w}
\includegraphics[width=\w]{codes_data_results_white.png}
\end{subfigure}%
\begin{subfigure}{\w}
\includegraphics[width=\w]{codes_data_results_white.png}
\end{subfigure}%
\begin{subfigure}{\w}
\caption*{1000}
\includegraphics[width=\w]{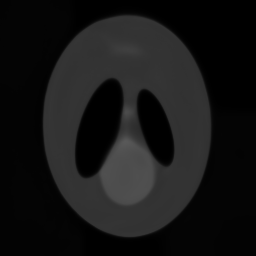}
\includegraphics[width=\hw]{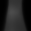}
\includegraphics[width=\hw]{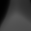}
\includegraphics[width=\w]{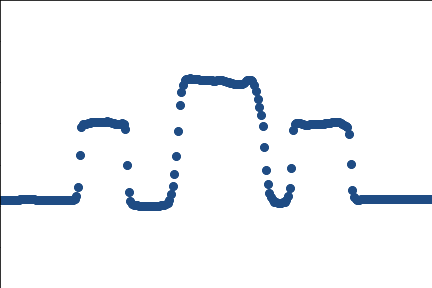}
\end{subfigure}%
\begin{subfigure}{\w}
\caption*{2500}
\includegraphics[width=\w]{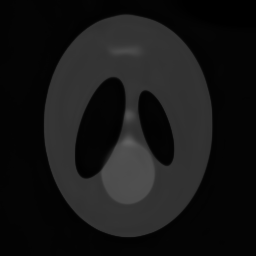}
\includegraphics[width=\hw]{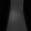}
\includegraphics[width=\hw]{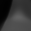}
\includegraphics[width=\w]{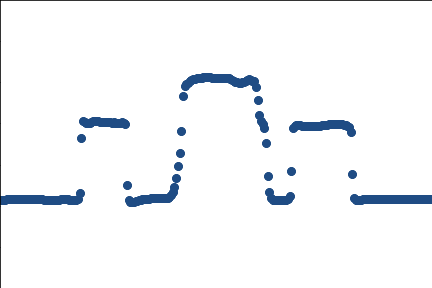}
\end{subfigure}%
\begin{subfigure}{\w}
\caption*{10900}
\includegraphics[width=\w]{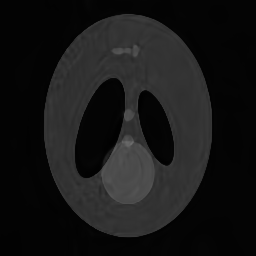}
\includegraphics[width=\hw]{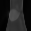}
\includegraphics[width=\hw]{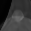}
\includegraphics[width=\w]{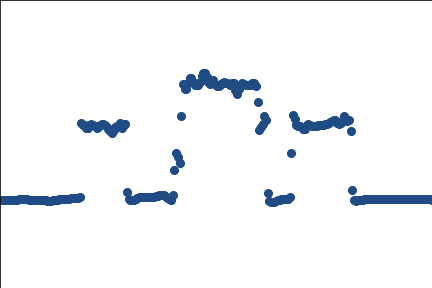}
\end{subfigure}%
\begin{subfigure}{\w}
\caption*{24700}
\includegraphics[width=\w]{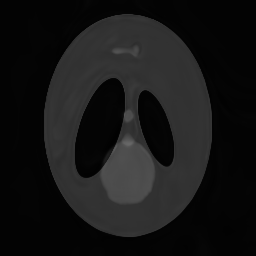}
\includegraphics[width=\hw]{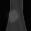}
\includegraphics[width=\hw]{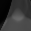}
\includegraphics[width=\w]{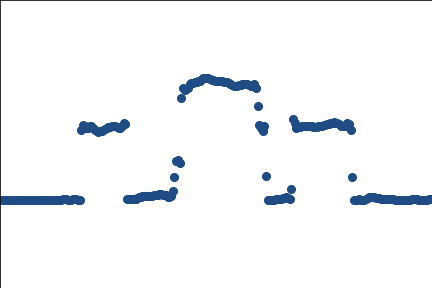}
\end{subfigure}%
\caption{Meshgrid input}
\end{subfigure}
\caption{Denoising with DIP. a) From left to right: Corrupted, ground truth, TV reconstruction, DIP with random input after 300, 1700 (visually best), 4900 (best SSIM), and 20000 iterations. Top row shows the full image, second and third rows the closeups indicated by the red rectangles, last row a cut through the horizontal red line. b) DIP with meshgrid input after 1000, 2500, 10900, 24700 iterations.
In b) we show iterations where the $L^2$ norm of the weights is of similar magnitude as in a).}
\label{fig:dip_experiments}
\end{figure}

\begin{figure}
\centering
\begin{subfigure}[c]{\textwidth}
\centering
\includegraphics[width=12cm]{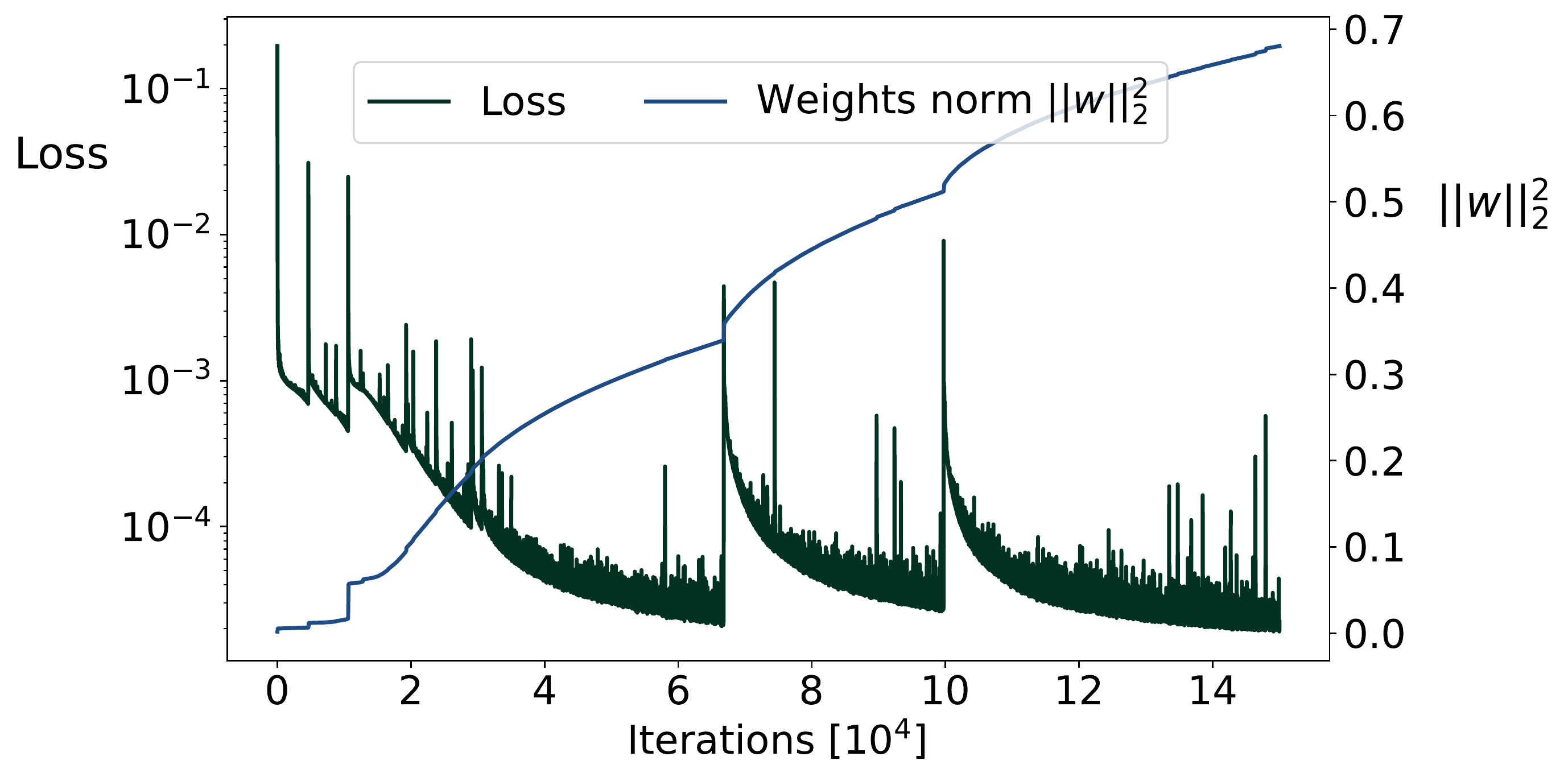}
\caption{Noise input}
\end{subfigure}\\
\begin{subfigure}[c]{\textwidth}
\centering
\includegraphics[width=12cm]{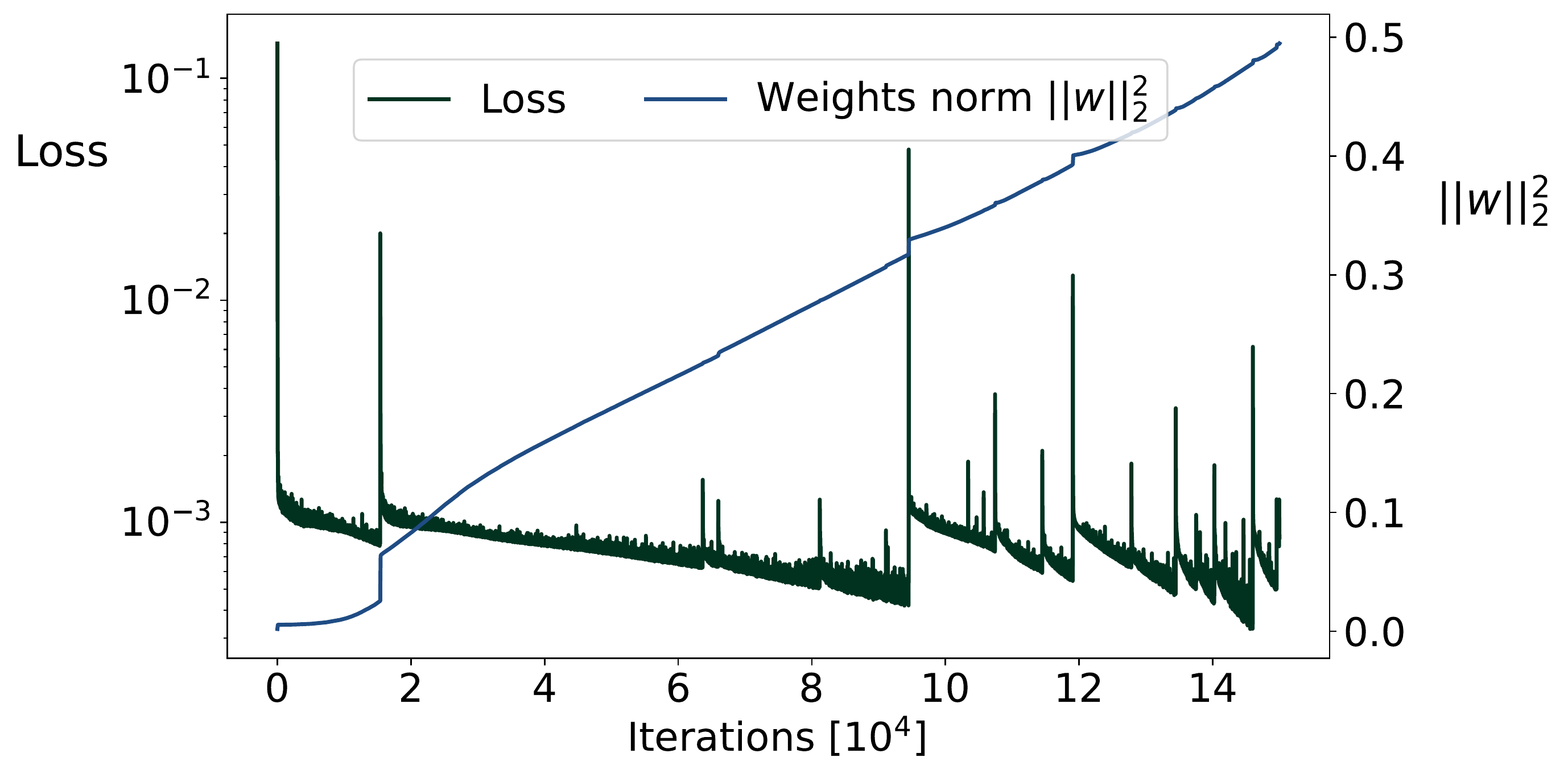}
\caption{Meshgrid input}
\end{subfigure}
\caption{Loss and norm of the weights during optimization. As a loss the standard $L^2$ loss (torch MSE) is used \cite[Section 3, Denoising and generic reconstruction]{deep_image_prior}. If $w\in\R^N$ denotes a vecotrized version of all trainable weights of the network, the depicted squared $L^2$ norm of the weights is computed as $\|w\|_2^2 =\frac{1}{N} \sum_{i=1}^N |w_i|^2$}
\label{fig:dip_graphs}
\end{figure}

\section{Conclusion}\label{sec:conclusion}
Our analysis shows that, in the infinite resolution limit, image representing functions generated by a large class of CNNs are continuous or even $C^1$, preventing them from having sharp edges in the form of jump discontinuities. While this holds for pre-trained CNNs as well as the training process itself, a crucial ingredient in both cases is $L^2$ regularity of the network weights. Our experiments confirm our analytical results, but also show their limitation in case of unbounded weights. Concerning limitations of the practical relevance of our results with respect to the discretization we refer the reader to the following paragraph. 

Given that early stopping or regularization of weights during network training is indispensable for reasons of well-posedness and stability, a practical consequence of our work is that either regularization different from basic $L^2$ penalties or suitably adapted architectures should be used whenever image data or, even more severely, residual noise (e.g., \cite{zhang2017beyond}) is to be approximated by CNNs.
\paragraph{Limitations} As is typical for regularity results in numerical analysis, our theory builds on the convergence for infinite resolution which dictates the extent to which effects are visible in the discrete setting. Thus, state-of-the art CNN methods with a large number of trainable parameters might produce results in the discrete setting that seem to contain sharp edges. Contrary to the infinite dimensional setting, where sharp edges can be defined rigorously as curves of points constituting a discontinuity, a formal definition of sharp edges cannot be given in the discrete setting due to the lack of a meaningful notion of continuity. An edge is perceived by an observer, whenever the contrast of neighboring pixels in an image exceeds a certain threshold which also depends on the surrounding image values. The theoretical results provided in this work guarantee that, if the resolution is progressively increased for a fixed experiment, the generated images will approximate a continuous limit and what is perceived as an edge at first will be continuously smoothed. 

Recent neural network models employ a multitude of heuristics for training, such as perturbing the input with random noise \cite{deep_image_prior}, applying gradient based methods to non-differentiable or non-convex functions, early stopping to avoid overfitting etc., which distort the minimization and cause the trained network to be only weakly related to the loss minimizer. These practices are not covered by our model. Nevertheless, we believe our results are of practical relevance, as shown by the experiments, since they describe the effect of an important building block of CNNs (regularization of weights during training) as the resolution tends towards infinity.

\bibliographystyle{siamplain}
\bibliography{references}

\newpage

\appendix

\section{Regularity Results Used within the Main Proofs}\label{sec:appendix_regularity}
In the following we provide some smaller regularity results which are used primarily to prove \Cref{thm:regularity}.

\begin{lemma}\label{lem:derivative_convolution}
Let $p\in (1,\infty]$, $u\in \BV(\Omega_{\Sigma})$ and $w\in L^p(\Sigma)$. Then $u*w\in W^{1,p}(\Omega)$.
\end{lemma}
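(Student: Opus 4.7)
The plan is to identify the weak gradient of $u * w$ as $(\D u) * w$, where $\D u \in \Mc(\Omega_\Sigma)$ denotes the distributional derivative of $u$ taken component-wise, and the convolution of a Radon measure with an $L^p$ function is understood as in \Cref{rmk:conv_measure}. Once this identification is established, the conclusion follows immediately: $u*w \in L^p(\Omega)$ by \Cref{lem_conv_measures}, and $\|(\D u)*w\|_p \leq |\D u|(\Omega_\Sigma)\,\|w\|_p$ by \Cref{rmk:conv_measure}, so both $u*w$ and its weak gradient lie in $L^p(\Omega)$.

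To verify the identification, I would test against an arbitrary $\varphi \in C_c^\infty(\Omega)$. Writing out the definition of $u*w$ and applying Fubini -- which is justified since $u \in L^1(\Omega_\Sigma)$, $w \in L^p(\Sigma)$, $\partial_i \varphi \in L^\infty$ has compact support, and all domains are bounded -- gives
\begin{equation*}
\int_\Omega (u*w)(x)\,\partial_i\varphi(x)\wrt x = \int_\Sigma w(y) \int_\Omega u(x-y)\,\partial_i\varphi(x) \wrt x \wrt y.
\end{equation*}
For each fixed $y \in \Sigma$, the translated test function $z \mapsto \varphi(z+y)$ extends by zero to an element of $C_c^\infty(\Omega_\Sigma)$, so the defining property of the distributional derivative of $u$ rewrites the inner integral as $-\int_{\Omega_\Sigma}\varphi(z+y)\wrt (D_i u)(z)$. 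A second application of Fubini, justified by boundedness of $\varphi$, finiteness of $|D_i u|$, and $w \in L^1(\Sigma)$ (since $\Sigma$ is bounded), then yields $\int_\Omega (u*w)\,\partial_i\varphi \wrt x = -\int_\Omega ((D_i u)*w)\,\varphi \wrt x$, which is precisely the weak derivative identity for $u*w$ with derivative $(D_i u)*w$.

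The main obstacle will be the careful bookkeeping around the duality-based definition of convolution between a Radon measure and an $L^p$ function; concretely, verifying that the pairing arising after the second Fubini step genuinely coincides with $(D_i u)*w$ in the sense of \Cref{rmk:conv_measure}. Should this direct route become cumbersome -- in particular in the endpoint case $p=\infty$ -- a cleaner alternative is to mollify $u$ inside $\Omega_\Sigma$ to obtain $u_\epsilon \in C^\infty \cap \BV$ with $u_\epsilon \to u$ in $L^1$ and $\nabla u_\epsilon \rightharpoonup \D u$ in the weak-star sense in $\Mc$, use the elementary identity $\partial_i(u_\epsilon * w) = (\partial_i u_\epsilon) * w$ available for smooth $u_\epsilon$, and pass to the limit using the uniform norm bounds from \Cref{lem_conv_measures} and \Cref{rmk:conv_measure} to obtain the weak gradient identification and the $L^p$ membership.
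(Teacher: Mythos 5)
Your proposal is correct and follows essentially the same route as the paper's proof: both establish $u*w,(\D u)*w\in L^p(\Omega)$ via \cref{lem_conv_measures} and \cref{rmk:conv_measure} and then identify $\D(u*w)=(\D u)*w$ distributionally, with your Fubini computation simply spelling out what the paper calls ``a standard computation.'' The only added care needed is the bookkeeping you already flagged, namely that the pairing obtained after the second Fubini step matches the duality definition of $(\D u)*w$.
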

\begin{proof}
By \cref{lem_conv_measures} and \Cref{rmk:conv_measure}, we have $u*w, (\D u)*w \in L^p(\Omega)$. A standard computation shows that $\D (u*w)=(\D u)*w$ in the distributional sense.
\end{proof}

\begin{lemma}\label{lem:activation_regularity}
Let $p\in [1,\infty)$, $\sigma: \R \rightarrow \R$ be Lipschitz continuous, and $u\in W^{1,p}(\Omega)$. Define $\sigma(u)$ as the point-wise application of $\sigma$, that is $\sigma(u):\Omega\rightarrow\R$, $\sigma(u)(x)=\sigma(u(x))$. Then $\sigma (u)\in W^{1,p}(\Omega)$.
\end{lemma}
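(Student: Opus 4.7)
The plan is first to verify $\sigma(u)\in L^p(\Omega)$ directly, and then to produce the weak gradient via a double approximation argument. Throughout I write $L$ for the Lipschitz constant of $\sigma$.

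First I would establish the $L^p$ bound. Since $|\sigma(t)|\leq |\sigma(0)|+L|t|$ for all $t\in\R$, and $\Omega$ is a bounded rectangular domain, one immediately obtains
\begin{equation*}
\|\sigma(u)\|_{L^p(\Omega)} \leq |\sigma(0)|\,|\Omega|^{1/p} + L\,\|u\|_{L^p(\Omega)} < \infty.
\end{equation*}

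For the weak gradient, the plan is a double approximation. On the one hand, mollify the nonlinearity: pick a standard mollifier $\rho_n$ on $\R$ and set $\sigma_n\coloneqq \sigma * \rho_n$. By Rademacher's theorem $\sigma$ is differentiable a.e. with $|\sigma'|\leq L$, so $\sigma_n\in C^\infty(\R)$ with $|\sigma_n'|\leq L$, and $\sigma_n\to\sigma$ uniformly on $\R$ with rate $O(1/n)$. On the other hand, use Meyers--Serrin to find $u_k\in C^\infty(\Omega)\cap W^{1,p}(\Omega)$ with $u_k\to u$ in $W^{1,p}$, and, after passing to a subsequence, also $u_k\to u$ and $\nabla u_k\to\nabla u$ pointwise a.e. For fixed $n$, the classical chain rule gives $\nabla(\sigma_n(u_k))=\sigma_n'(u_k)\nabla u_k$. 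Letting $k\to\infty$, continuity of $\sigma_n'$ yields a.e. convergence of the right-hand side, and a generalized dominated convergence theorem (with $L$-multiples of $|\nabla u_k|\to|\nabla u|$ in $L^p$ as dominants) promotes this to $L^p$ convergence. Passing to the limit in the integration-by-parts identity against $C_c^\infty(\Omega)$ then shows $\sigma_n(u)\in W^{1,p}(\Omega)$ with weak gradient $\sigma_n'(u)\nabla u$.

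Finally I would let $n\to\infty$. Uniform convergence $\sigma_n\to\sigma$ on $\R$ together with boundedness of $\Omega$ gives $\sigma_n(u)\to\sigma(u)$ in $L^p(\Omega)$, while $|\nabla\sigma_n(u)|=|\sigma_n'(u)\nabla u|\leq L|\nabla u|$ provides a uniform $L^p$ bound. For $p>1$, weak $L^p$-compactness yields a subsequence $\nabla\sigma_{n_j}(u)\rightharpoonup g$ in $L^p$, and passing to the limit in integration by parts identifies $g$ as the distributional gradient of $\sigma(u)$, so $\sigma(u)\in W^{1,p}(\Omega)$.

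The main obstacle I expect is the edge case $p=1$, where $L^1$ is not reflexive and weak compactness fails. There I would instead appeal to pointwise convergence: $\sigma_n'=\sigma'*\rho_n\to\sigma'$ a.e. on $\R$ by the Lebesgue differentiation theorem, combined with the classical fact (Stampacchia's lemma) that the weak gradient $\nabla u$ vanishes a.e. on $u^{-1}(N)$ for every Lebesgue null set $N\subset\R$. These together yield $\sigma_n'(u)\nabla u\to\sigma'(u)\nabla u$ pointwise a.e. on $\Omega$, after which dominated convergence with dominant $L|\nabla u|$ furnishes the strong $L^1$ limit and completes the argument.
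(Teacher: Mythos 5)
Your proof is correct, but it takes a genuinely different route from the paper's. The paper's argument is a two-line reduction: after normalizing to $\sigma(0)=0$, it invokes a known chain-rule theorem for Lipschitz outer functions on all of $\R^2$ (Leoni, Theorem 12.69) and transports the result to $\Omega$ via a bounded Sobolev extension operator $E:W^{1,p}(\Omega)\to W^{1,p}(\R^2)$; the normalization is needed there precisely so that $\sigma(E(u))$ remains $p$-integrable on the unbounded domain. You instead reprove the underlying chain rule from scratch directly on $\Omega$, by mollifying the nonlinearity to get $\sigma_n\in C^\infty$ with $|\sigma_n'|\le L$ and $\sigma_n\to\sigma$ uniformly, approximating $u$ by Meyers--Serrin, and passing to the limit first in $k$ (generalized dominated convergence) and then in $n$ (weak compactness for $p>1$, or the Stampacchia property $\nabla u=0$ a.e.\ on $u^{-1}(N)$ combined with a.e.\ convergence of $\sigma_n'$ for $p=1$). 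Your approach is self-contained, needs no extension operator or boundary regularity, and dispenses with the normalization $\sigma(0)=0$ since constants are integrable on the bounded $\Omega$; it even identifies the weak gradient as $\sigma'(u)\nabla u$, which is more than the lemma asks. Incidentally, your $p=1$ argument via Stampacchia and dominated convergence already works for all $p\in[1,\infty)$, so the separate weak-compactness step for $p>1$ is dispensable. The paper's proof buys brevity at the cost of outsourcing the real work to the cited theorem; yours makes the mechanism explicit.
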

\begin{proof}
By replacing $\sigma$ with $\sigma-\sigma(0)$ if necessary, we can assume $\sigma(0)=0$. In \cite[Theorem 12.69]{leoni2017first} the result is shown for $\Omega=\R^2$. As proven in \cite[Theorem 5.24]{adams2003sobolev}, there exists a linear, bounded extension $E: W^{1,p}(\Omega)\rightarrow W^{1,p}(\R^2)$ such that $E(u)|_\Omega=u$ almost everywhere. Then $\sigma (E(u)) \in W^{1,p}(\R^2)$ implies $\sigma (E(u))|_\Omega = \sigma (u) \in W^{1,p}(\Omega)$.
\end{proof}

\begin{lemma}\label{lem:continuous_weak_derivative}
Let $p\in [1,\infty)$ and $u\in W^{1,p}(\Omega)$ such that $u, \D u\in C(\overline{\Omega})$. Then $u \in C^1(\overline{\Omega})$ with classical derivative $\D u$.
\end{lemma}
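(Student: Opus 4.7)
The plan is to use mollification locally inside $\Omega$ to produce smooth approximations of $u$ whose gradients converge uniformly to $\D u$, then invoke the classical fact that such a uniform limit must itself be $C^1$. The hypothesis that both $u$ and $\D u$ are already continuous up to the boundary will do the rest of the work for the statement on $\overline{\Omega}$.

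First, I would fix an arbitrary $x_0 \in \Omega$ and choose $r > 0$ such that $\overline{B_r(x_0)} \subset \Omega$. For $\epsilon \in (0,r/2)$ and $\rho_\epsilon$ a standard mollifier supported in $B_\epsilon(0)$, I would define $u_\epsilon := u * \rho_\epsilon$ on $B_{r/2}(x_0)$. Standard properties of mollification give $u_\epsilon \in C^\infty(B_{r/2}(x_0))$, and the definition of weak derivative yields the key identity $\nabla u_\epsilon = (\D u) * \rho_\epsilon$ on $B_{r/2}(x_0)$. Here it is essential that the mollifier argument uses $u \in W^{1,p}(\Omega)$ rather than only continuity of $u$, so that this commutation is justified.

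Next, since $u$ and $\D u$ are continuous, hence uniformly continuous, on the compact set $\overline{B_r(x_0)}$, a routine estimate shows $u_\epsilon \to u$ and $\nabla u_\epsilon \to \D u$ uniformly on $B_{r/2}(x_0)$ as $\epsilon \to 0$. At this point I would invoke the classical real-analysis result: if a sequence of $C^1$ functions converges uniformly and the sequence of gradients converges uniformly, then the limit is $C^1$ and its classical gradient equals the uniform limit of the gradients. This yields $u \in C^1(B_{r/2}(x_0))$ with classical derivative $\D u$.

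Since $x_0 \in \Omega$ was arbitrary, the classical derivative of $u$ exists on all of $\Omega$ and coincides with $\D u$. Because $u$ and $\D u$ are assumed to extend continuously to $\overline{\Omega}$, this gives $u \in C^1(\overline{\Omega})$, as claimed. I do not foresee a serious obstacle here: the only mildly delicate point is the combination of the mollification identity with the uniform-convergence-of-derivatives criterion, but both pieces are entirely standard, and no boundary-adapted extension of $u$ is needed since the hypothesis already supplies continuous extensions of $u$ and $\D u$ to $\overline{\Omega}$.
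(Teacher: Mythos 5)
Your proposal is correct and follows essentially the same route as the paper's proof: local mollification, the commutation identity $\nabla(u*\rho_\epsilon) = (\D u)*\rho_\epsilon$, uniform convergence of both $u_\epsilon$ and $\nabla u_\epsilon$ on compactly contained sets, and the completeness of $C^1$ under the norm $\|v\|_\infty + \|\D v\|_\infty$ to identify the limit, with the boundary statement supplied by the assumed continuity of $u$ and $\D u$ on $\overline{\Omega}$. The only cosmetic difference is that you work on balls $B_{r/2}(x_0)$ rather than arbitrary $\Omega' \Subset \Omega$, which changes nothing of substance.
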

\begin{proof}
Let $\phi\in C^\infty_c(\R^2)$ be a mollifier, that is, $\phi\geq 0$, $\int_{\R^2}\phi = 1$ and $\supp (\phi)\Subset B_1(0)$, where $\Subset$ means compactly embedded. Let $\Omega'\Subset\Omega$ be arbitrary. Denoting $\phi_\epsilon(x) = \frac{1}{\epsilon^2}\phi(\frac{x}{\epsilon})$ it is a standard result that 1.) $\phi_\epsilon * \tilde{u}\in C^\infty(\R^2)$, 2.) $\phi_\epsilon * u \rightarrow u$ uniformly in $\Omega'$, and 3.) $\D (\phi_\epsilon * u) =  (\D\phi_\epsilon) * u = \phi_\epsilon*\D u \rightarrow \D u$ uniformly in $\Omega'$. Note that for the second and third property for $\epsilon$ small enough we don't need the zero extension of $u$ since we only consider points in $\Omega'\Subset\Omega$.
Since $C^1(\Omega')$ equipped with the norm $\|v\| = \|v\|_\infty + \| \D v\|_\infty$ is a Banach space, this implies that $u = \lim_{\epsilon\rightarrow 0} \phi_\epsilon * u \in C^1(\Omega')$ and that the classical and weak derivatives of $u$ coincide on $\Omega'$. Because $\Omega'\Subset \Omega$ was arbitrary, we can conclude that $u\in C^1(\Omega)$ and the classical and weak derivatives coincide on $\Omega$ as well. The extension to the boundary follows from the assumption $u, \D u\in C(\overline{\Omega})$ and therefore $u, \D u$ are uniformly continuous (see \cite[p. 618]{evans2010partial}).
\end{proof}

\section{Implementation and Architecture Details}\label{sec:appendix_implementation}

\subsection{End-to-end Imaging}\label{appendix:unet}
\paragraph{Network Architecture}
For this experiment we used a basic U-net architecture which was taken from \cite{im2im_uq_git} and consists of a downsampling and an upsampling path. The downsampling path of the U-net contains 5 modules, each of them consisting of a max pooling operation (this is omitted only in the first module) followed by two consecutive iterations of convolution, batch normalization and a ReLU activation. Similarly, the upsampling path contains 5 modules, the first 4 each consisting of a concatenation with the corresponding activation of the downsampling path (skip connection) and then a bilinear upsampling layer again followed by two consecutive iterations of convolution, batch normalization and ReLU activation. Only the last module consists of a simple convolutional layer. Convolution kernels are of size 3x3, and up- and downsampling is performed with a factor of 2. In order to adjust the resolution of the U-net we introduced a parameter to scale all convolution kernels by the same factor and adjusts padding accordingly. For further details about the implementation we refer to the publicly available source code \cite{cnn_regularity_git}.

\paragraph{Optimization}
Optimization is done using ADAM with a learning rate of 0.001 and a varying weight decay. We use a batch size of 32 and train for 40 epochs on 8000 images.

\paragraph{Data and experimental setup}
We use synthetic data for this experiment. Precisely, the data consists of images containing up to 5 circles of random center, radius and grayscale value. One data sample is generated as described in \Cref{algo:sampling}. For the baseline network we work with images of the size 128x128 and for the double and triple resolution versions the images are of double and triple the baseline resolution per dimension, respectively.

We train the U-net for end to end denoising. The data corruption is additive Gaussian noise with zero mean and a standard deviation of 0.07. We train the network with different values of the weight decay parameter (0, 0.001, 0.01, 0.1, 0.5) of Pytorch's ADAM optimizer. Since as a loss functional we use torch's torch.nn.MSE() with the \emph{mean} reduction, but the weight decay uses $\ell^2$ regularization with the \emph{sum} reduction, we have to adjust the weight decay when changing resolution. Precisely, changing resolution by a factor $\gamma$ in both dimensions has to be compensated by changing the weight decay by a factor $1/\gamma^2$.

\begin{algorithm}
\caption{Sampling data}\label{algo:sampling}
\flushleft\noindent\textbf{Input} Size $n$.
\begin{algorithmic}
\STATE $im \gets (n,n)$ zero image \COMMENT{Given an image shape, initialize a zero image.}
\FOR{$i=1,2,\dots 5$}
\STATE $x_i = (x_i^1,x_i^2)\gets N(0.5,0.5^2)$ \COMMENT{Sample circle center point.}
\STATE $r_i\gets N(0.5,0.5^2)$ \COMMENT{Sample circle radius.}
\STATE $c_i\gets N(0.5,0.5^2)$ \COMMENT{Sample gray scale value.}
\ENDFOR

\STATE $X_i^j \gets \lfloor n x_i^j\rfloor$ \COMMENT{Convert to integer values on pixel grid.}
\STATE $R_i = \lfloor nr_i \rfloor$
\FOR{$i=1,2,\dots 5$}
	\FOR{$k,l = 1,2,\dots n$} 
		\IF{$|X_i-(k,l)|^2 \leq R_i^2$} %
			\STATE $im_{k,l} += c_i$
		\ENDIF
	\ENDFOR
\ENDFOR
\end{algorithmic}
\flushleft\textbf{Output} Sample $im$.
\end{algorithm}

\subsection{Deep Image Prior}\label{appendix:dip}
\paragraph{Network Architecture}
The architecture and optimization we used to generate \Cref{fig:dip_experiments} is exactly the same as used in \cite{deep_image_prior} for the denoising experiment of Figure 4 of this work and can be found in the git repository \cite{deep_image_prior_git}. Nonetheless we describe it here again for the reader's convenience.

We use an encoder-decoder (“hourglass”) architecture with skip connections. We use bilinear upsampling and for downsampling strided convolutions with a stride of 2. LeakyReLU serves as non-linearity and in convolution layers reflection padding is used. A sketch of the network architecture can be found in \Cref{fig_sketch_network}. There, $D_i$ are downsampling, $U_i$ upsampling, and $S_i$ skip connection modules. Each of these modules consists of a specific composition of convolutions, up- or donwsampling layers, nonlinearities, and batch normalization (for details see the supplement to \cite{deep_image_prior}). In the up- and downsampling modules each convolutional layer uses 128 kernels of size 3x3 and in the skip connection modules 4 kernels of size 1x1.

For the version with random input, we used a input with 32 channels sampled from U(0,1/10) and for the version with meshgrid input, the input contains two channels created using np.meshgrid(np.arange(0,M)/(M-1),np.arange(0,N)/(N-1)) where (N,M) is the image shape.

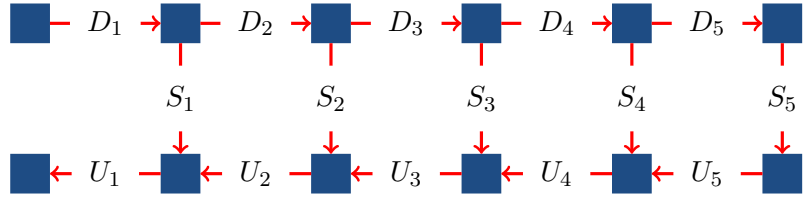
\begin{figure}
\centering
\begin{tikzpicture}
\begin{scope}[every node/.style={rectangle,thick,draw,fill,myblue, minimum size=0.5cm}]
    \node (d1) at (2,2) {};
    \node (d2) at (4,2) {};
    \node (d3) at (6,2) {};
    \node (d4) at (8,2) {};
    \node (d5) at (10,2) {};
    \node (d6) at (12,2) {};
    
    \node (u1) at (2,0) {};
    \node (u2) at (4,0) {};
    \node (u3) at (6,0) {};
    \node (u4) at (8,0) {};
    \node (u5) at (10,0) {};
    \node (u6) at (12,0) {};
\end{scope}
 
\begin{scope}[->,
              every node/.style={fill=white,circle},
              every edge/.style={draw=red,very thick}]
              
    \path [->] (d1) edge node {$D_1$} (d2);
    \path [->] (d2) edge node {$D_2$} (d3);
    \path [->] (d3) edge node {$D_3$} (d4);
    \path [->] (d4) edge node {$D_4$} (d5);
    \path [->] (d5) edge node {$D_5$} (d6);
    
    \path [->] (d2) edge node {$S_1$} (u2);
    \path [->] (d3) edge node {$S_2$} (u3);
    \path [->] (d4) edge node {$S_3$} (u4);
    \path [->] (d5) edge node {$S_4$} (u5);
    \path [->] (d6) edge node {$S_5$} (u6);

    \path [->] (u6) edge node {$U_5$} (u5);
    \path [->] (u5) edge node {$U_4$} (u4);
    \path [->] (u4) edge node {$U_3$} (u3);
    \path [->] (u3) edge node {$U_2$} (u2);
    \path [->] (u2) edge node {$U_1$} (u1);

\end{scope}
\end{tikzpicture}
\caption{Sketch of the network architecture used for the deep image prior experiment.}
\label{fig_sketch_network}
\end{figure}

\paragraph{Optimization}
Optimization is done using ADAM with a learning rate of 0.01. During fitting of the networks we use a noise-based regularization, i.e., at each iteration we perturb the input with an additive normal noise with standard deviation 1/30 which is also proposed by the authors of \cite{deep_image_prior}. Since optimization process tends to destabilize as the loss goes down and approaches a certain value, which manifests as a significant loss increase and a blur in the network output, the optimization loss is tracked and if the loss difference between two consecutive iterations is higher than a threshold we return to parameters from the previous iteration.

\section{Miscellaneous}\label{sec:appendix_miscellaneous}
A component that is also frequently used in CNNs, but was omitted in the main paper for the sake of a clearer presentation, is batch normalization. As we will show now, batch normalization can be included in our framework as well without affecting the results.

\begin{definition}{(Batch normalization)}\label{defin:batch_norm}
Let $u=(u_i)_{i=1}^N\in [L^2(\theta)]^N$. We define the empirical mean $\overline{u}\coloneqq \frac{1}{N|\theta|}\sum_{i=1}^N\int_\theta u_i(x)\; \wrt x$ and variance $\sigma^2\coloneqq \frac{1}{N|\theta|}\sum_{i=1}^N\int_\theta(u_i(x)-\overline{u})^2\; \wrt x$. For an $\epsilon>0$ we define the batch normalization with parameters $\gamma,\beta\in\R^N$ as
\[{\BN}_{\gamma,\beta}(u)_i = \gamma\frac{u_i-\overline{u}}{\sqrt{\sigma^2+\epsilon}}+\beta.\]
\end{definition}
Hence, for fixed $u\in L^2$ the batch normalization ${\BN}_{\gamma,\beta}(u)$ is merely an affine transformation of $u$ and, thus, the regularities $\BV$, $L^p$, $W^{1,p}$, $C^0$, and $C^1$ are maintained under batch normalization. As a result, \Cref{thm:regularity} is still true if batch normalization is included, the only restriction being that it cannot be applied to an input Radon measure as $\sigma$ is not well-defined in this case. But this is only a minor restriction since the regularity is lifted to $L^2$ at the latest after the first convolutional layer.

\begin{lemma}\label{lem:batch_norm}
Let $\theta\subset\R^2$ be a bounded domain, $i=1,2,\dots N$, and $u^h\in \PC^h(\theta)$, $u\in [C(\overline{\theta})]^N$ such that for all $i=1,2,\dots ,N$, $u^h_i\rightarrow u_i$ uniformly, then ${\BN}_{\gamma,\beta}(u^h)_i \rightarrow {\BN}_{\gamma,\beta}(u)_i$ uniformly.
\end{lemma}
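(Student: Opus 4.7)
The plan is to establish convergence of the scalar quantities involved in batch normalization (the empirical mean and variance), and then combine them with the uniform convergence of $u^h_i$ itself using the continuity of the map that defines ${\BN}_{\gamma,\beta}$. Since the formula in \Cref{defin:batch_norm} is built from arithmetic operations on $u_i$, $\overline{u}$, $\sigma^2$, and the safeguard $\epsilon>0$ rules out division by zero, the whole argument reduces to tracking how uniform convergence propagates through these operations.

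First, I would show $\overline{u^h}\to\overline{u}$. Uniform convergence $u^h_i\to u_i$ on the bounded domain $\theta$ implies convergence in $L^1(\theta)$, so $\int_\theta u^h_i\to\int_\theta u_i$, and summing over the finitely many indices $i=1,\dots,N$ and dividing by the constant $N|\theta|$ yields the claim. Next, I would show $\sigma^2(u^h)\to\sigma^2(u)$. Since $u_i\in C(\overline{\theta})$ is bounded and $u^h_i\to u_i$ uniformly, the sequence $(u^h_i)_h$ is uniformly bounded on $\theta$, and the same is true of the converging scalars $\overline{u^h}$. Hence $(u^h_i-\overline{u^h})^2\to (u_i-\overline{u})^2$ uniformly on $\theta$ (the map $t\mapsto t^2$ is uniformly continuous on bounded sets), and integrating and summing as before gives $\sigma^2(u^h)\to\sigma^2(u)$.

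Finally, I would assemble the three pieces. Because $\sqrt{\sigma^2(u^h)+\epsilon}\geq\sqrt{\epsilon}>0$, the scalar factor $\gamma/\sqrt{\sigma^2(u^h)+\epsilon}$ converges to $\gamma/\sqrt{\sigma^2(u)+\epsilon}$ by continuity of $t\mapsto 1/\sqrt{t+\epsilon}$ on $[0,\infty)$. Simultaneously, $u^h_i-\overline{u^h}\to u_i-\overline{u}$ uniformly on $\theta$, as it is the sum of a uniformly convergent sequence of functions and a convergent sequence of constants. Multiplying a uniformly convergent, uniformly bounded sequence of functions by a convergent sequence of scalars preserves uniform convergence, and adding the constant $\beta$ does not alter this. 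Combining these observations yields ${\BN}_{\gamma,\beta}(u^h)_i\to{\BN}_{\gamma,\beta}(u)_i$ uniformly on $\theta$.

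The main ``obstacle'' is really only bookkeeping: ensuring that the various quantities are uniformly bounded so that products of converging sequences converge uniformly, and verifying that the $\epsilon$-safeguard keeps the denominator away from zero independently of $h$. Both points follow directly from the hypotheses (boundedness of $\theta$, continuity of each $u_i$ on $\overline{\theta}$, and $\epsilon>0$ in the definition of ${\BN}_{\gamma,\beta}$), so no delicate estimate is required.
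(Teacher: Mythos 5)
Your argument is correct and is exactly the route the paper takes, which merely remarks that the result follows from the definition of batch normalization together with the fact that uniform convergence on a bounded domain implies $L^p$ convergence; you have simply written out the details (convergence of $\overline{u^h}$ and $\sigma^2(u^h)$, the $\epsilon$-safeguard on the denominator, and the arithmetic of uniformly convergent sequences). No discrepancy to report.
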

\begin{proof}
This result is a direct consequence of the definition of batch normalization using that uniform convergence on a bounded domain implies $L^p$ convergence for all $p$.
\end{proof}
That is, batch normalization maintains uniform convergence and as a consequence also \Cref{thm:CNN_cont} remains true if batch normalization is included. Further the results remain unchanged if the scalar parameters $\gamma$ and $\beta$ are trainable, as long as they are penalized during training as a regularization.

\end{document}